\documentclass[conference,compsoc]{IEEEtran}
\IEEEoverridecommandlockouts
\usepackage{tikz}
\usepackage{amsmath}
\usepackage{booktabs}
\usepackage{multirow}
\usepackage{algorithm}
\usepackage{algpseudocode}
\usepackage{xcolor}
\usepackage[colorlinks=true,linkcolor=red,citecolor=green]{hyperref}
\usepackage{amsmath}
\usepackage{amssymb}
\usepackage{amsthm}
\usepackage{caption}
\usepackage{tabularx} 
\usepackage{graphicx}

\usepackage{tcolorbox}

\tcbuselibrary{theorems}
\newtcolorbox{simplegreybox}{
  colback=gray!10,
  colframe=black,
  boxrule=0.5pt,
  arc=4pt,
  left=5pt,
  right=5pt,
  top=5pt,
  bottom=5pt,
  before skip=10pt,
  after skip=10pt,
}
\newtheorem{definition}{Definition}
\newtheorem{theorem}{Theorem}

\usepackage{fontawesome5}  
\usepackage{url} 

\usepackage{filecontents}
\usepackage{threeparttable}
\usepackage{cite}
\usepackage{pifont} 

\newcommand{\xmark}{\ding{55}}   
\newcommand{\bcircle}{\ding{108}}  
\newcommand{\ocircle}{\ding{109}} 

\usepackage{tikz}

\newcommand{\half}{
    \begin{tikzpicture}[baseline=-0.5ex]
        \draw[fill=black] (0,0) circle (1mm);
        \fill[white] (0,0) -- (270:1mm) arc[start angle=270, end angle=90, radius=1mm] -- cycle;
    \end{tikzpicture}
}
\ifCLASSINFOpdf
\else
\fi
\begin{document}
%
\title{ARES: Scalable and Practical Gradient Inversion Attack in Federated Learning through Activation Recovery}

\author{
    \IEEEauthorblockN{
        Zirui Gong\textsuperscript{1}, 
        Leo Yu Zhang\textsuperscript{\faEnvelope 1}, 
        Yanjun Zhang\textsuperscript{1}, 
        Viet Vo\textsuperscript{2}, 
        Tianqing Zhu\textsuperscript{3}, 
        Shirui Pan\textsuperscript{1},
        Cong Wang\textsuperscript{4}
    }
    \IEEEauthorblockA{\textsuperscript{1} Griffith University} 
    \IEEEauthorblockA{\textsuperscript{2} Swinburne University of Technology}
    \IEEEauthorblockA{\textsuperscript{3} City University of Macau}
    \IEEEauthorblockA{\textsuperscript{4} City University of Hong Kong}
    \thanks{\textsuperscript{\faEnvelope} Correspondence to Leo Yu Zhang \href{mailto:leo.zhang@griffith.edu.au}{(leo.zhang@griffith.edu.au).}}

}


%


\maketitle



\begin{abstract}
Federated Learning (FL) enables collaborative model training by sharing model updates instead of raw data, aiming to protect user privacy. However, recent studies reveal that these shared updates can inadvertently leak sensitive training data through gradient inversion attacks (GIAs). Among them, active GIAs are particularly powerful, enabling high-fidelity reconstruction of individual samples even under large batch sizes. Nevertheless, existing approaches often require architectural modifications, which limit their practical applicability.
In this work, we bridge this gap by introducing the \underline{A}ctivation \underline{RE}covery via \underline{S}parse inversion (ARES) attack, an active GIA designed to reconstruct training samples from large training batches without requiring architectural modifications. Specifically, we formulate the recovery problem as a noisy sparse recovery task and solve it using the generalized Least Absolute Shrinkage and Selection Operator (Lasso). To extend the attack to multi-sample recovery, ARES incorporates the imprint method to disentangle activations, enabling scalable per-sample reconstruction. We further establish the expected recovery rate and derive an upper bound on the reconstruction error, providing theoretical guarantees for the ARES attack.
Extensive experiments on CNNs and MLPs demonstrate that ARES achieves high-fidelity reconstruction across diverse datasets, significantly outperforming prior GIAs under large batch sizes and realistic FL settings. Our results highlight that intermediate activations pose a serious and underestimated privacy risk in FL, underscoring the urgent need for stronger defenses.
\end{abstract}

%
\IEEEpeerreviewmaketitle




%
\section{Introduction}
\begin{table*}[ht]
\centering
\caption{Comparison of different GIAs. \bcircle~ supported; \ocircle~ not supported; \protect \half partially supported.}
\label{tab:attack_comparison}
\resizebox{\textwidth}{!}{%
\begin{tabular}{ccccccccc}
\toprule
\textbf{Methods} & \textbf{Attack Type}  & \textbf{Large Batch} & \textbf{Architecture Integrity} & \textbf{One-shot} & \textbf{Complex Data}  & \textbf{Theoretical Guarantee}  \\
\midrule
iDLG \cite{zhao2020idlg}& Passive & \ocircle & \bcircle & \bcircle & \ocircle  & \ocircle \\
InvertingGrad (NeurIPS 2020) \cite{geiping2020inverting}&Passive  & \ocircle & \bcircle & \bcircle & \ocircle  & \ocircle \\
GradInversion (CVPR 2021) \cite{yin2021see}&Passive  & \ocircle & \bcircle & \bcircle & \ocircle  & \ocircle \\
FedLeak (USENIX 2025) \cite{fan2025boosting}&Passive & \ocircle & \bcircle & \bcircle & \bcircle  & \ocircle \\
Fishing (ICML 2023) \cite{wen2022fishing}&Active & \makebox[1.5em][c]{\half}\hspace{-0.4em}{$^*$} & \bcircle & \ocircle & \bcircle  & \ocircle \\
RtF (ICLR 2022) \cite{fowlrobbing}&Active & \bcircle & \ocircle & \bcircle & \bcircle  & \bcircle \\
Trap Weight (EuroS\&P 2023) \cite{boenisch2023curious}&Active & \bcircle & \makebox[1.6em][c]{\half}\hspace{-0.4em}$^\dagger$ & \bcircle & \bcircle &   \ocircle\\
LOKI (S\&P 2024) \cite{zhao2024loki}& Active & \bcircle & \ocircle & \bcircle & \bcircle  & \bcircle \\
\textbf{ARES (Ours)}& Active  & \bcircle & \bcircle & \bcircle & \bcircle &  \bcircle \\
\bottomrule
\end{tabular}}
\begin{tablenotes}
\item * Fishing \cite{wen2022fishing} can only recover a single image from a batch.  $^\dagger$ Trap Weight \cite{boenisch2023curious} requires the input to be positive to preserve architecture integrity.
\end{tablenotes}
\end{table*}

Federated Learning (FL) \cite{mcmahan2017communication} is a decentralized paradigm for training machine learning models, wherein multiple clients collaboratively optimize a shared global model without disclosing their private data. In this framework, each client preserves its local dataset and performs model training independently. The locally computed updates or gradients are then transmitted to a central server, which aggregates them to refine the global model and subsequently redistributes it to the clients for the next training round. FL has gained great popularity and has been widely adopted in healthcare and finance, as it enables the training of machine learning models on large-scale datasets without exposing clients’ raw data to the central server \cite{nguyen2022federated,mammen2021federated,gong2025not,gong2023agramplifier,chen2026beyond}.

However, recent studies demonstrate that FL can provide a false sense of privacy, as the server can extract sensitive information, including the clients’ private training data, from the shared updates.
Such threats, referred to as \textit{gradient inversion attacks} (GIA) \cite{zhu2019deep, zhao2020idlg, geiping2020inverting, yin2021see, fan2025boosting, wen2022fishing, fowlrobbing, boenisch2023curious, zhao2024loki,feng2024uncovering}, can be broadly classified into two categories, as summarized in Table \ref{tab:attack_comparison}. In the passive setting, an honest-but-curious server attempts to reconstruct training samples by iteratively minimizing the discrepancy between the gradients of dummy samples and the observed gradients of the true data \cite{zhu2019deep,zhao2020idlg, geiping2020inverting, yin2021see, fan2025boosting}. While conceptually straightforward, these approaches often fail to generalize to complex datasets and suffer substantial performance degradation when the training batch size increases (usually limited to batch sizes smaller than 64).

In contrast, active attacks assume a malicious server that manipulates the model parameters or architecture to amplify privacy leakage \cite{wen2022fishing, fowlrobbing, boenisch2023curious, zhao2024loki}. A commonly used strategy is the \textit{linear layer leakage}, where if only a single sample $x$ activates neuron $i$ in a fully connected (FC) layer, the sample $x$ can be directly revealed by dividing the weight gradient by the bias gradient of neuron $i$.
Building on this principle, the adversary designs malicious parameters to disentangle individual sample contributions across neurons, ensuring that each neuron’s recovery corresponds to a distinct training sample.
Despite their effectiveness, these attacks suffer from notable practical limitations, as they typically require modifications to the model architectures.
First, linear layer leakage can only reconstruct the direct inputs to an FC layer. Consequently, prior attacks typically insert a specially designed FC layer before the target model to enable reconstruction \cite{fowlrobbing,zhao2024loki}, resulting in a nonstandard architecture that may raise suspicion. Second, even for those networks that have an FC layer in the first layer (e.g., MLP), achieving high recovery rates demands the number of output neurons in the FL layer exceed the batch size, with about four times larger being optimal \cite{zhao2024loki}. 
This requirement forces adjustments to the network dimensions relative to the batch size, increasing the extent of structural modification and reducing the attacks' practicality.

\textbf{Our work:}
We bridge this gap and propose the \underline{A}ctivation \underline{RE}covery via \underline{S}parse inversion (ARES) attack, a practical and effective active GIA that scales to realistic large batch sizes without necessitating architectural modifications, thereby enhancing its applicability to real-world scenarios.
To achieve this, we first reveal that the practical limitations of prior active GIAs can be overcome by tackling the fundamental challenge of inverting hidden activations into training samples. Even without architectural modifications, the attacker can still leverage existing FC layers—typically located in deeper parts of the network—to obtain activations that are fed into them. These activations can then be further inverted to recover the corresponding training samples.

Motivated by this observation, this work focuses on the inversion problem from activations to training samples. To tackle this, we first formulate the operations before the FC layer as a linear transformation followed by a nonlinear transformation (caused by activation functions). Under this formulation, we identify two key challenges that hinder recovery.
First, the nonlinear activations in earlier layers render the overall mapping non-invertible, thereby precluding exact recovery of the training sample through a direct inverse transformation. Second, the linear transformation is often underdetermined (i.e., the number of unknowns exceeds the number of known measurements), particularly in MLP-based architectures. For example, sample features typically lie in a high-dimensional space (e.g., 14,784 for ImageNet), while the activation in the FC layer resides in a much lower-dimensional space, equal to the number of output neurons. Even in CNN-based networks, ReLU activations before the FC layer may zero out many measurements, discarding information that could aid recovery. Consequently, the available measurements are insufficient compared to the dimensionality of the target sample, making the reconstruction of training data from activations an  \textit{underdetermined nonlinear inversion problem}.

To address this problem, we relax the inversion task and reformulate it as a \textit{noisy sparse recovery problem}, and use the knowledge from compressed sensing theory to solve it \cite{1580791,candes2006stable,candes2008restricted,plan2016generalized,zhang2016bi}. Specifically, to mitigate the challenges posed by nonlinearity, we approximate the nonlinear transformation as a noisy, scaled linear mapping. To further address the underdetermined nature of the problem, we exploit the fact that many types of data (e.g., natural images, text embeddings, and audio signals) admit sparse representations in suitable domains. In other words, such data can be effectively compressed into a vector with most entries being zero, reducing the number of unknowns to recover. 
Based on these observations, we reformulate the problem as a noisy sparse recovery task and employ the generalized Least Absolute Shrinkage and Selection Operator (Lasso) method \cite{plan2016generalized} to identify the sparsest solution consistent with the observed measurements.
To extend recovery from single-sample to a batch of samples, we integrate the \textit{imprint method} proposed by RtF \cite{fowlrobbing}. In particular, we leverage the bias of the FC layer as cut-offs so that different samples activate different neurons. As a result, each neuron primarily captures the contribution of a single sample, enabling linear layer leakage to recover the corresponding activation.
Here, we do not require the number of neurons in the FC layer to exceed the batch size, as we allow the second layer separation. 
Finally, by invoking the recovery guarantees provided by the Restricted Isometry Property (RIP) \cite{candes2008restricted}, we establish a theoretical upper bound on the reconstruction error.

We evaluate our attack across five image datasets, including MNIST \cite{lecun1998mnist}, CIFAR-10 \cite{krizhevsky2009learning}, ImageNet \cite{deng2009imagenet}, HAM10000 \cite{tschandl2018ham10000}, Lung-Colon Cancer \cite{borkowski2019lung}, one text dataset (Wikitext \cite{merity2016pointer}), and one audio dataset (AudioMNIST \cite{becker2024audiomnist}) using representative CNN and MLP architectures. 
Our results demonstrate that our ARES  consistently outperforms all state-of-the-art attacks, achieving up to 7~$\times$ improvement in PSNR across various datasets and batch sizes.
Furthermore, we assess the robustness of our attack under five defense strategies, including differential privacy (DP) \cite{geyer2017differentially}, gradient quantization \cite{yue2023gradient}, gradient sparsification \cite{yue2023gradient}, data augmentation \cite{gao2023automatic} and secure aggregation \cite{bonawitz2017practical,fereidooni2021safelearn}, showing that ARES remains effective in these protected settings. Our key contributions can be summarized as follows:

\begin{itemize}
\item We reveal the practical limitation of the existing active GIA lies in the unsolved challenge of inverting hidden activations into training samples.
Based on the observation, we formulate the inversion task as a noisy sparse recovery problem and leverage principles from compressed sensing to solve it.

\item We propose ARES, a practical and effective active GIA that scales to realistic large batch sizes without requiring architectural modifications. ARES achieves this by exploiting linear layer leakage to extract intermediate activations and leveraging sparse recovery techniques to reconstruct the training samples from the extracted activations.

\item We provide a theoretical upper bound on recovery error and conduct extensive experiments\footnote{Our code is available at 
\href{https://github.com/gongzir1/ARES}{\textbf{https://github.com/gongzir1/ARES}}.
} on image, text, and audio datasets, demonstrating that ARES consistently outperforms state-of-the-art attacks by up to 7~$\times$ in PSNR across different settings.

\end{itemize}

\section{Preliminary}
\subsection{Gradient Inversion Attacks}
Passive GIAs \cite{zhu2019deep,zhao2020idlg,geiping2020inverting,yin2021see,fan2025boosting,feng2026mitigating} assume an honest-but-curious server or an external adversary with access to the model and individual gradients from each client. The attacker tries to minimize the difference between the observed ground-truth gradient and the gradient generated by the dummy sample, thereby optimizing the dummy sample to approximate the original input. Formally, the reconstruction can be formulated as
\begin{equation}
    \tilde{x} = \arg\min_x \left\| \nabla \mathcal{L}(x) - g \right\|^2,
\end{equation}
where $x$ is the dummy sample, $\tilde{x}$ is the reconstructed sample, $\mathcal{L}$ is the loss function, and $g$ is the observed gradient.
Recent works enhance this optimization by incorporating various regularizers, such as total variation (TV)~\cite{geiping2020inverting}, or some image priors tailored to natural image distributions~\cite{yin2021see}, to improve visual fidelity. These methods often yield good reconstruction results only when the batch size is small and the dataset is relatively simple. However, as the batch size increases, gradient contributions from different samples become entangled, making the optimization landscape more complex and the reconstruction less accurate.

By contrast, active GIAs assume a malicious server that can modify the model parameters or architecture to launch a stronger attack \cite{fowlrobbing,boenisch2023curious,wen2022fishing,zhao2024loki}. A commonly used strategy is the \textit{linear layer leakage}, where if only a \textit{single} sample $x$ activates neuron $i$ in an FC layer, the sample $x$ can be directly revealed by solving
\begin{equation}
    x = \frac{\partial \mathcal{L}}{\partial W_i} / \frac{\partial \mathcal{L}}{\partial b_i},
\end{equation}
where $\frac{\partial \mathcal{L}}{\partial W_i}$ and $\frac{\partial \mathcal{L}}{\partial b_i}$ are the gradients of the loss $\mathcal{L}$ with respect to the weight and bias for neuron $i$, respectively. 
To recover a batch of samples, RtF \cite{fowlrobbing} introduces the imprint method, which encourages each sample in the batch to leave a distinct imprint on a specific neuron, thereby enabling each neuron to be reverted to reveal individual samples.
However, this approach only enables the recovery of inputs fed directly into the FC layer. Consequently, reconstructing the original training samples requires placing the imprint module (the specially designed FC layer) at the beginning of the target model, leading to a non-standard architecture that could raise suspicion on the client side.

Trap Weight \cite{boenisch2023curious} attempts to overcome this limitation by leveraging the existing FC layer within the network for training sample reconstruction, thereby eliminating the need for architectural modifications. It initializes the weight matrices of all layers preceding the FC layer to act as direct-pass (identity) mappings, allowing inputs to propagate through unchanged to the FC layer. 
Then, the linear layer leakage can directly reveal the training samples.
LOKI \cite{zhao2024loki} extends this idea and proposes an attack targeting secure aggregation–based FL. In this setup, each client receives the model with distinct parameter configurations, where a subset of kernels (e.g., three) is set as direct-pass mappings and the remaining kernels are set to zero. This client-specific configuration prevents weight gradient from mixing across clients, thereby enabling large-scale recovery.
However, both methods are effective only when the network processes nonnegative inputs. Under standard settings, where inputs are normalized to follow $\mathcal{N}(0,1)$, ReLU activations in the preceding layers suppress negative values, thereby breaking the intended identity mapping and leading to information loss.
Scale-MIA \cite{shi2023scale} also leverages the model’s built-in FC layer to conduct an attack without modifying the architecture. However, it requires an auxiliary dataset (i.e., a subset of the training dataset) to train a decoder that maps latent representations back to the original samples, which limits its ability to generalize to unseen domains.
Detailed descriptions of attacks mentioned in Table \ref{tab:attack_comparison} are provided in Appendix \ref{app: attacks}.

\subsection{Defenses Against Gradient Inversion Attacks}
Defenses against GIA can be classified into three main categories: gradient perturbation-based methods \cite{geyer2017differentially,yue2023gradient}, data augmentation-based defense \cite{gao2021privacy, gao2023automatic}, and secure aggregation-based methods \cite{aono2017privacy,bonawitz2017practical}.
Gradient perturbation-based methods modify gradients sent to the server to avoid directly leaking training sample-related information to the server. 
Differential privacy (DP) \cite{geyer2017differentially} perturbs ground-truth gradients by adding random noise. Gradient sparsification \cite{yue2023gradient,xue2024revisiting} transmits only the most significant gradient elements, while gradient quantization \cite{yue2023gradient} reduces precision by representing gradient values with fewer bits. Although effective, these strategies incur a trade-off between model utility and privacy, as more substantial modifications yield better protection but degrade the gradient utility. 

Data augmentation–based defenses \cite{gao2021privacy, gao2023automatic} apply carefully chosen transformations to the training data to prevent adversaries from reconstructing both the augmented and original samples from shared gradients, while preserving model utility. The key idea is to disrupt the prior knowledge exploited by attackers, i.e., total variation or batch normalization statistics, that guides the reconstruction process.

Secure aggregation-based methods protect user training data by ensuring that the server can only access the plaintext of the aggregated gradients \cite{bonawitz2017practical,fereidooni2021safelearn}. This makes data reconstruction significantly more challenging, as the aggregated gradients correspond to a larger global batch size that must be recovered.
Detailed descriptions of defenses are provided in Appendix \ref{app: defenses}.

\subsection{Threat Model and Attack Scope}

\noindent\textbf{Threat Model.} 
We consider a \emph{malicious server} that controls the FL training process and can modify the weights and biases of the global model before distributing them to clients. Unlike prior works \cite{fowlrobbing, zhao2024loki}, the server cannot alter the network architecture or design a non-standard model to facilitate an attack. This assumption is realistic because, in FL, the model architecture and training protocol are typically agreed upon in advance, and unsolicited architectural changes are generally rejected or detected. Given that clients depend on the server for model distribution and without insight into its internal operations, it is reasonable to treat the server as potentially malicious \cite{wen2022fishing,wang2021field,fowl2022decepticons,shi2023scale}. 
Such an attack can also be executed by any party that obtains the server’s state, e.g., through a temporary breach \cite{fowl2022decepticons}.
The adversary's objective is to reconstruct as many distinct training examples as possible.


\noindent\textbf{Attack Scope.} 
We evaluate attacks on two widely used families of models: (i) CNN-based networks, consisting of convolutional layers followed by fully connected layers, and (ii) MLP-based networks, including fully connected layers.
We also assume that the clients' private training data admits a sparse representation in a suitable domain. 
This is reasonable, as most real-world data (e.g., natural images, text embeddings, and audio signals) can be sparsely represented in suitable domains \cite{6472238,5456194, 5452966}. 

\section{Method}

\subsection{Motivation}
Existing active gradient inversion attacks (aGIAs) exploit linear layer leakage to analytically reconstruct training samples from the gradients of an FC layer. By configuring malicious parameters, the attacker aims to have each neuron activated by a single sample (or to imprint each neuron with a single sample), so that the recovery from each neuron directly reveals individual samples
\cite{wen2022fishing, fowlrobbing, boenisch2023curious, zhao2024loki}. Compared to passive GIAs, aGIAs induce stronger privacy breaches and are more effective at recovering samples from large training batches.
Despite their effectiveness, a major criticism is their reliance on modifying the network architecture, which mainly stems from two factors.
First, linear layer leakage can only reveal the direct inputs to the FC layer. Thus, recovering the original training samples requires the FC layer to be the first layer of the model. This condition is not satisfied in most modern architectures, such as CNNs, which contain multiple convolutional layers before the FC layer.
To address this, Trap Weight \cite{boenisch2023curious} proposes initializing the layer preceding the FC layer with a direct-pass (identity-like) weight matrix to avoid value distortion, thereby aligning the FC-layer inputs with the model’s original inputs.
However, this approach works only for nonnegative inputs, as the ReLU activation zeros out negative values and results in information loss.
Consequently, existing works typically insert a specially designed FC layer before the target network to enable the training sample recovery \cite{fowlrobbing,zhao2024loki}.
Second, even for models where an FC layer is already the first layer (e.g., multi-layer perceptron–based networks), achieving a high recovery rate requires the number of output neurons to exceed the batch size \cite{fowlrobbing, zhao2024loki}, and in practice, four times larger than the batch size is optimal \cite{zhao2024loki}. Therefore, the attacker must modify the network’s dimensionality to satisfy this requirement.

However, we observe that both constraints can be overcome by addressing the fundamental challenge of inverting activations into training samples.
For the first constraint, even without model modification, we can leverage the built-in FC layer in standard networks to extract the individual activations that are fed into it.
For instance, consider a layer \(l > 1\) that is an FC layer in the network. If only a single sample activates neuron $i$, the corresponding activation \(h^{(l-1)}\) associated with neuron $i$ can be reconstructed as  
\begin{equation}
\label{eq: single_fc_recovery}
    h^{(l-1)} = \frac{\partial \mathcal{L}}{\partial W_i^{(l)}} \Big/ \frac{\partial \mathcal{L}}{\partial b_i^{(l)}},
\end{equation}
where \(\frac{\partial \mathcal{L}}{\partial W_i^{(l)}}\) and \(\frac{\partial \mathcal{L}}{\partial b_i^{(l)}}\) are the weight gradient and bias gradient of neuron $i$ in layer $l$ (see Appendix \ref{app: batch_activation_recovery} for a detailed derivation). Although feasible, the unresolved challenge lies in reconstructing the training samples from recovered activations.
For the second constraint, which mandates a large number of output neurons in the FC layer, this limitation can be mitigated by employing a subsequent FC layer to further disentangle samples that remain mixed in the first layer. In this case, it still need to address the challenge of reconstructing the training samples from activations (separated in the second layer).

Motivated by the above observations, in this work, we focus on the problem of reconstructing training samples from hidden activations.
To start with, we formalize the computation prior to layer \(l\) as
\begin{equation}
\label{eq: og_forward}
h^{(l-1)} = f(Wx + b),
\end{equation}
where \(f\) denotes a nonlinear transformation (caused by activation functions), and \(W\) and \(b\) represent the effective weight and bias of the linear transformation (i.e., weight and bias of the first layer), respectively.
While this formulation is straightforward, inverting it to recover $x$ is far from trivial for two main reasons.

\noindent\textbf{Challenges:}  
\begin{figure}
    \centering   \includegraphics[width=0.8\linewidth]{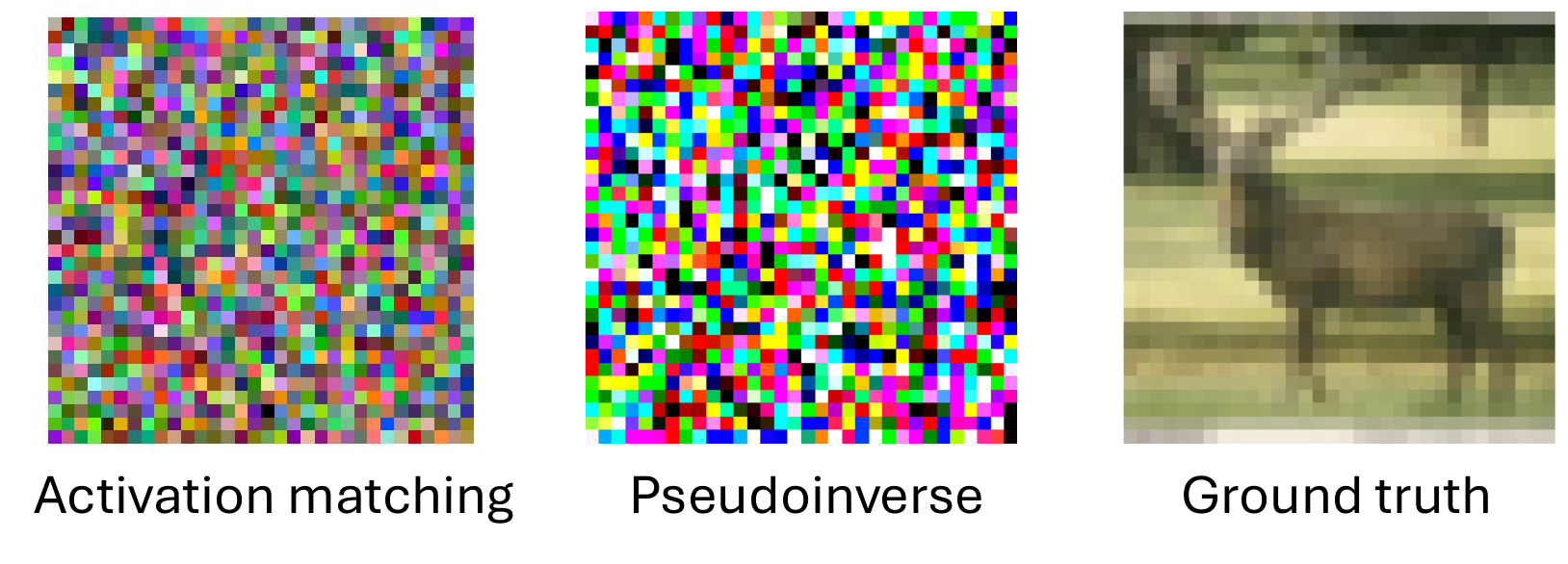}
    \caption{Recovery results obtained using activation matching (optimizing activation discrepancy) and pseudoinverse (approximating inverse).} 
    \label{fig:challenge}
\end{figure}
First, the nonlinear activations in preceding layers render the overall transformation non-invertible, preventing exact recovery of \(x\) via a direct inverse mapping. Second, the linear transformation in Eq.~\eqref{eq: og_forward} is often underdetermined, particularly in MLP-based networks. Specifically, sample features typically lie in a high-dimensional space (e.g., 14,784 for ImageNet), whereas the activation of the FC layer resides in a much lower-dimensional space. Even in CNN-based networks, ReLU activations preceding the FC layer may zero out many measurements, which discards the information that could aid recovery. As a result, the number of available measurements is insufficient relative to the dimensionality of the target sample. 
Consequently, recovering \(x\) from \(h^{(l-1)}\) constitutes an \textit{underdetermined nonlinear inversion problem}.

Solving this problem is inherently challenging. One possible approach is to optimize the discrepancy between ground-truth activations and the dummy activations to optimize the dummy samples that best match the original; we refer to this approach as \textit{activation matching}. However, unlike traditional gradient matching, which leverages gradients from all layers to guide the optimization, this method relies solely on activations from a single layer. As a result, the recovery process is highly ill-posed and often yields a suboptimal reconstruction effect (as shown in the left panel of Fig.~\ref{fig:challenge}). 
Another approach involves using the Moore–Penrose pseudoinverse of the weight matrix to approximate the inverse of $W$, and only using the linear part of the activation function (i.e., the region where ReLU is active and the output equals the input) to calculate the value of $x$. 
Nevertheless, because the system remains underdetermined, the solution is not unique; therefore, this method also fails to reconstruct the original samples faithfully (as shown in the middle panel of Fig.~\ref{fig:challenge}).



\subsection{Overview}
\begin{figure*}
    \centering   \includegraphics[width=0.8\textwidth]{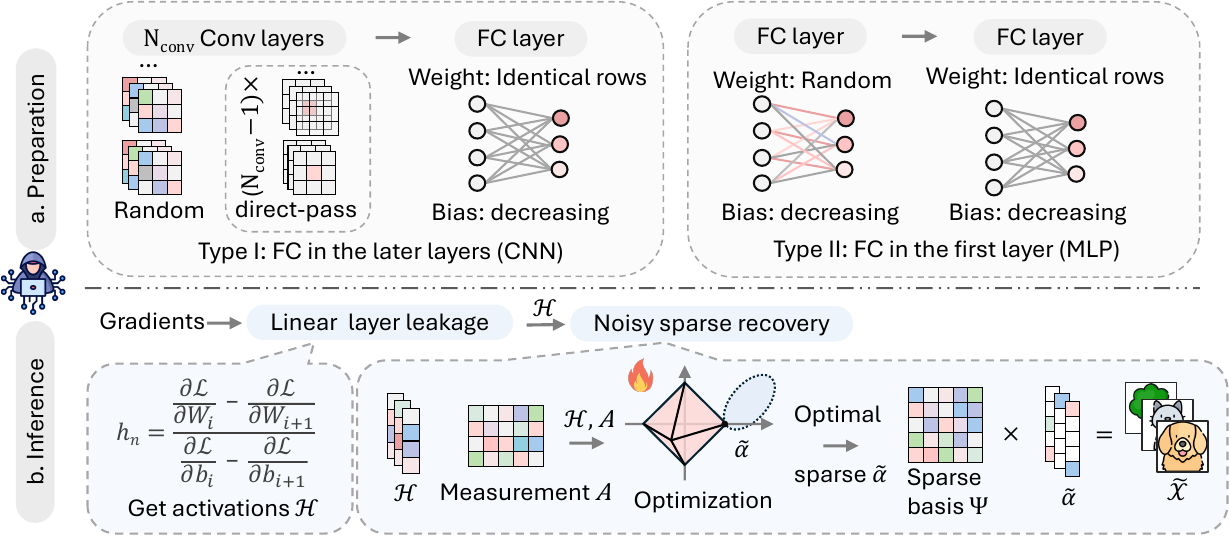}
    \caption{Overview of ARES attack. The method consists of two main stages: (a) the attacker initializes network with malicious parameters to facilitate information leakage; (b) using gradients returned by the client, the attacker first recovers activations through linear layer leakage and then reconstructs input samples via noisy sparse recovery. 
    }
    \label{fig: overall}
\end{figure*}
Based on the above observations, this work focuses on addressing the underdetermined nonlinear inversion problem to recover training samples from activations, thereby enabling an effective and practical GIA. To achieve this, the malicious server (hereafter referred to as the attacker) operates in two main stages: preparation and inference, as illustrated in Fig.~\ref{fig: overall}. During the preparation stage, the attacker configures the network parameters to maximize information leakage. Specifically, for the FC layer, it designs malicious weights and biases such that each neuron leaves a distinct imprint corresponding to a single sample, effectively isolating one sample per neuron. For layers preceding the FC layer (if any), the attacker sets the weight parameters to provide sufficient and non-redundant measurements, facilitating accurate reconstruction of training samples from recovered activations. Once configured, the attacker sends malicious parameters to clients for local training.

After clients complete local training and return their updates to the server, the attacker proceeds to the inference stage. The attacker first leverages linear layer leakage by using the gradients of the FC layer to recover either individual training samples or the activations fed into the FC layer. If the recovery directly yields the training samples, they are retained; otherwise, the attacker reconstructs the samples from the recovered activations. To handle the underdetermined and nonlinear nature of this reconstruction task, the attacker reformulates it as a noisy sparse recovery problem and solves it using the generalized Lasso method \cite{plan2016generalized}. By leveraging the recovery guarantees provided by the Restricted Isometry Property (RIP) \cite{candes2008restricted}, we further derive a theoretical upper bound on the recovery error.

In the following sections, we first provide an explanation of how to recover a single training sample from its activation under underdetermined and nonlinear transformations (Section \ref{sec: noisy_sparse_recovery}). We then extend this approach to enable the recovery of batches of samples (Section \ref{sec: muti-sample}). Finally, we present attack implementations for CNN and MLP networks and analyze the expected recovery rate (Section \ref{sec: final_summary}).

\subsection{Noisy Sparse Recovery}
\label{sec: noisy_sparse_recovery}
In this section, we focus on recovering a single training sample from activation under underdetermined and nonlinear measurement.
To make the problem tractable, we first relax the nonlinear mapping as a noisy and scaled linear mapping, which is a commonly used technique in the literature \cite{plan2016generalized,genzel2019recovering,genzel2016high}.
Specifically, we assume $f_i(\xi) = \mu \xi + z_i$, where $\mu$ is a scaling factor that captures the linear component of the $f$ and $z_i \sim \mathcal{N}(0, \sigma^2)$ is the noise term. 
To quantify the linear component and nonlinearity of the function, we provide the following definition.
\begin{definition}[Linear Component and Nonlinearity of a Function \cite{plan2016generalized}]
\label{def: nonlinear}
Let $f:\mathbb{R}^d \to \mathbb{R}^k$ be a nonlinear function, and let $\xi \sim \mathcal{N}(0,1)$ be a standard Gaussian random variable.  
The effective linear component of $f$ is defined as
\begin{equation}
\mu := \frac{1}{k} \sum_{i=1}^{k} \mathbb{E}[f_i(\xi) \xi],
\end{equation}
which represents the average linear component of $f$ across $k$ dimensions.
The residual nonlinearity is quantified by
\begin{equation}
\sigma^2 := \frac{1}{k} \sum_{i=1}^{k} \mathbb{E}[(f_i(\xi) - \mu \xi)^2], \quad
\eta^2 := \frac{1}{k} \sum_{i=1}^{k} \mathbb{E}[(f_i(\xi) - \mu \xi)^2 \xi^2],
\end{equation}
which measures the variance of the nonlinear part and how it interacts with the input magnitude, respectively.
\end{definition}
Under this approximation, Eq.~\eqref{eq: og_forward} becomes\footnote{We omit the layer index for clarity; unless explicitly stated otherwise, we assume layer $l$ is the FC layer and $h$ is the input to layer $l$.} 
\begin{equation} 
\label{eq: appro_linear} 
h \approx \mu (W x + b),
\end{equation}
and the recovery problem reduces to a noisy, underdetermined linear inversion problem.
To further handle the underdetermined issue, we leverage the fact that many types of data (e.g., natural images, text embeddings, and audio signals) admit sparse representations in suitable domains \cite{5456194, 5452966}. In other words, these data can be effectively compressed into vectors with most entries being zero, thereby reducing the number of unknowns to be recovered. 
Under this condition,  we express 
$x = \Psi \alpha$, where \(\alpha\) is a sparse coefficient vector that encodes the essential information of \(x\), with most of its entries being zero and \(\Psi\) is a sparse basis matrix that maps the sparse representation \(\alpha\) to the original signal space.  To obtain the sparse basis $\Psi$ for the image and audio data, we apply Discrete Cosine Transform (DCT) compression, providing a data-independent basis that efficiently represents samples in the frequency domain.
For the text dataset, we learn a sparse basis from the public token embeddings matrix of the pretrained model, which more effectively captures the underlying sparse structure of the token representations.
Then, Eq. \eqref{eq: appro_linear} can be expressed as 
\begin{equation}
\label{eq: sparse_forward}
h \approx \mu A \alpha + \mu b,
\end{equation}
where \(A = W \Psi\) denotes the sensing matrix that acts directly on the sparse vector \(\alpha\) (see top of Fig. \ref{fig:sparse_recovery} for illustration).
This transformation enforces sparsity on the variable to be recovered, effectively reducing its degrees of freedom.

Nonetheless, achieving an exact and unique recovery of $\alpha$ from $h$ in Eq. \eqref{eq: sparse_forward} further requires the measurement $A$ to possess sufficient information. In particular, it should provide enough independent measurements and preserve the relative geometry among all sparse vectors, such that different sparse vectors yield distinct outcomes under the mapping $A$. Formally, this requirement is characterized by the Restricted Isometry Property (RIP) \cite{candes2008restricted}.

\begin{definition}[Restricted Isometry Property \cite{candes2008restricted}]
A matrix \(A\) is said to satisfy the \textit{Restricted Isometry Property (RIP)} of order \(s\) with constant \(\delta_s \in (0,1)\) if, for all \(s\)-sparse vectors \(\alpha\) (i.e., vectors with at most \(s\) non-zero entries),
\begin{equation}
\label{eq: rip}
(1 - \delta_s) \|\alpha\|_2^2 \leq \|A \alpha\|_2^2 \leq (1 + \delta_s) \|\alpha\|_2^2.
\end{equation}
Here, \(\delta_s\) is the \emph{restricted isometry constant} that quantifies how \(A\) preserves the Euclidean norm of all \(s\)-sparse vectors. 
\end{definition}
\noindent Eq. \eqref{eq: rip} means that multiplying a sparse vector \(\alpha\) by \(A\) changes its Euclidean norm by at most a factor of \((1 \pm \delta_s)\).
A smaller \(\delta_s\) indicates better preservation of the Euclidean norm of the sparse vector.
\begin{simplegreybox}
\noindent\textbf{Remark I.}
If a matrix \(A\) satisfies the \emph{Restricted Isometry Property} (RIP) of order \(s\), it approximately preserves the Euclidean norm of all \(s\)-sparse vectors.  
Moreover, if \(A\) satisfies the RIP of order \(2s\), it also approximately preserves the pairwise Euclidean distances between all $s$-sparse vectors, 
since the difference between any two $s$-sparse vectors is at most $2s$-sparse. 
In other words, the measurement $A\alpha$ preserves the geometric relationships among sparse signals, ensuring that distinct sparse inputs remain distinguishable after transformation.
Consequently, the information contained in $A\alpha$ is sufficient to enable exact recovery of the sparse vector $\alpha$ in the noiseless case, and stable recovery with small error when noise is present.
\end{simplegreybox}
\begin{figure}
    \centering   \includegraphics[width=\linewidth]{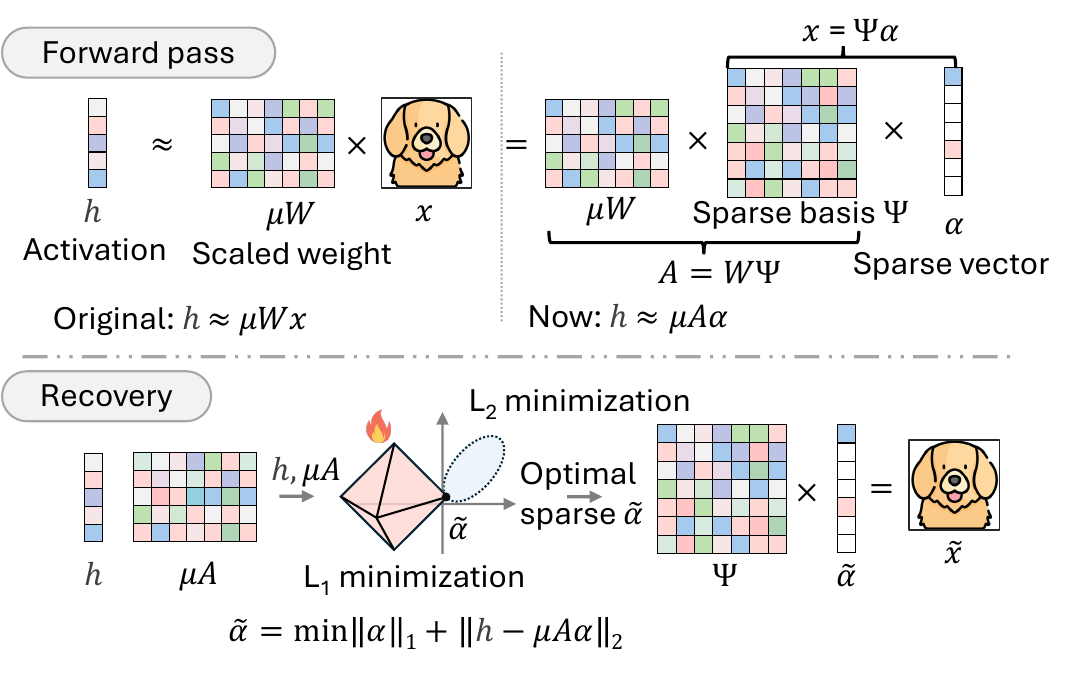}
    \caption{Top: forward pass through the network. Bottom: sparse vector recovery via $\ell_1$ optimization. The bias term is omitted for clarity.}   \label{fig:sparse_recovery}
\end{figure}

We now consider how to configure the malicious parameters such that the resulting measurement matrix $A$ satisfies the RIP, thereby enabling the exact recovery of $\alpha$ from $h$.
This can be achieved by initializing the weight $W$ as a Gaussian random matrix, since the product of a Gaussian random matrix and a sparse orthonormal basis $\Psi$ satisfies the RIP with high probability \cite{candes2005decoding}.
Building on the RIP condition, we focus on recovering the sparse vector~$\alpha$. Intuitively, our goal is to find the sparsest solution that satisfies Eq.~\eqref{eq: sparse_forward}. Formally, this can be expressed as solving

\begin{equation}
\label{eq:l0}
\tilde{\alpha}=\arg\min_{\alpha}  \| \alpha \|_0 + \| h-\mu A \alpha - \mu b \|_2,
\end{equation}
where the first term denotes the number of non-zero entries in \(\alpha\) and the second term quantifies the discrepancy between the observed activation $h$ and the activation reconstructed from the sparse vector $\alpha$.
Since solving Eq. \eqref{eq:l0} is NP-hard, a common relaxation is to replace the $\ell_0$ norm with the convex $\ell_1$ norm, 
\begin{equation}
\label{eq: l1}
\tilde{\alpha}=\arg\min_{\alpha}  \| \alpha \|_1 + \| h-\mu A \alpha - \mu b  \|_2,
\end{equation}
where $\|\alpha\|_1$ promotes sparsity while keeping the optimization tractable. 
We solve Eq.~\eqref{eq: l1} as a convex Lasso problem using the CVXPY framework \cite{diamond2016cvxpy}, which dispatches the underlying solver to compute the solution. This procedure recovers the sparsest vector $\alpha$ consistent with the observed activations.
Once $\tilde{\alpha}$ is obtained , it can be mapped back to the input space via  $\tilde{x}=\Psi \tilde{\alpha}$ (see bottom of Fig. \ref{fig:sparse_recovery} for illustration).
We now provide an upper bound on the recovery error of the solution to Eq.~\eqref{eq: l1}.

\begin{theorem}[Recovery Error \cite{plan2016generalized}]
\label{therm: error}
Let \( \alpha \) be an \(s\)-sparse vector, and let \(A\) be a measurement matrix satisfying the RIP of order \(2s\).  
Then, the solution $\tilde{\alpha}$ to Eq. \eqref{eq: l1} recovers $\alpha$ with the error bounded by
\begin{equation}
\label{eq: error_bound}
\varepsilon \sim \frac{\sqrt{s \log(d/s)} \, \sigma + \eta}{\sqrt{m}} 
+ |\mu - 1| \|\alpha \|_2,
\end{equation}
where $s$ is the sparsity of $\alpha$, $d$ is the ambient dimension of $\alpha$, $s \log(d/s)$ quantifies the effective dimension of the sparse signal,
and $m$ is the number of effective measurements (i.e., the number of non-redundant rows of $A$).
\end{theorem}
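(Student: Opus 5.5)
The plan is to reduce the statement to the generalized Lasso analysis of Plan and Vershynin, which controls $\|\tilde\alpha - \mu\alpha\|_2$. We then add a triangle-inequality step that converts this into a bound on $\|\tilde\alpha - \alpha\|_2$; that conversion is where the term $|\mu-1|\|\alpha\|_2$ comes from.

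\textbf{Step 1: rewrite the measurements.} Subtract the known bias contribution so that the model reads
\[
y := h - \mu b = f(A\alpha) - \mu b .
\]
Then use Definition~\ref{def: nonlinear} to decompose each coordinate as $y_i = \mu\langle a_i,\alpha\rangle + z_i$. Here $z_i := f_i(\langle a_i,\alpha\rangle) - \mu\langle a_i,\alpha\rangle$, and we normalize $\|\alpha\|_2 = 1$ so that $\langle a_i,\alpha\rangle \sim \mathcal{N}(0,1)$ when $W$ is Gaussian. By construction of $\mu$, each $z_i$ is \emph{uncorrelated} with $\langle a_i,\alpha\rangle$, even though it is not independent of $a_i$. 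Its second moments are $\sigma^2$ and $\eta^2$; the latter is the variance weighted by $\xi^2$.

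\textbf{Step 2: analyze the Lasso error in the Gaussian-width framework.} Let $K$ be the $\ell_1$ ball of radius $\|\mu\alpha\|_1$. Optimality of $\tilde\alpha$ gives
\[
\|A(\tilde\alpha-\mu\alpha)\|_2^2 \le 2\langle z, A(\tilde\alpha-\mu\alpha)\rangle ,
\]
with $\tilde\alpha-\mu\alpha$ lying in the descent cone $D$ of $\|\cdot\|_1$ at $\mu\alpha$. The two sides are handled as follows.

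\emph{Lower side.} Use the RIP of order $2s$, or equivalently a restricted eigenvalue bound over $D\cap S^{d-1}$, to get $\|Av\|_2 \gtrsim \sqrt m\,\|v\|_2$ for $v \in D$.

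\emph{Upper side.} Split $a_i = \langle a_i,\alpha\rangle\alpha + P_{\alpha^\perp}a_i$. The component orthogonal to $\alpha$ is independent of $z$. Conditioning on $z$, the term $\sup_{v\in D\cap S^{d-1}}\langle A^{\top}_{\perp} z, v\rangle$ is at most $\|z\|_2\, w(D\cap S^{d-1})$. Using $\|z\|_2 \approx \sqrt m\,\sigma$ and $w(D\cap S^{d-1})^2 \lesssim s\log(d/s)$, this contributes $\sigma\sqrt m\sqrt{s\log(d/s)}$. The component along $\alpha$ contributes $\sum_i z_i\langle a_i,\alpha\rangle$, a sum of $m$ mean-zero terms with variance $\eta^2$, so it is of order $\sqrt m\,\eta$.

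Dividing through gives
\[
\|\tilde\alpha-\mu\alpha\|_2 \lesssim \frac{\sqrt{s\log(d/s)}\,\sigma+\eta}{\sqrt m}.
\]
The formulation in Eq.~\eqref{eq: l1} is the Lagrangian or unsquared-residual variant of this problem. I would argue that it is equivalent to the constrained form for a suitable regularization weight, or else simply invoke \cite{plan2016generalized} directly for it.

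\textbf{Step 3: convert to the error on $\alpha$.} Write
\[
\|\tilde\alpha-\alpha\|_2 \le \|\tilde\alpha-\mu\alpha\|_2 + |\mu-1|\,\|\alpha\|_2 .
\]
Combined with Step~2, this gives the bound in Eq.~\eqref{eq: error_bound} with high probability. Here $m$ counts only the non-redundant rows of $A$, since zeroed or duplicated rows add no independent Gaussian measurements.

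\textbf{Main obstacle.} The hard part is Step~2's handling of the noise, because $z$ depends on $A$. The orthogonal decomposition is the tool I would use first, since it separates the $\sigma$ term, which scales with the Gaussian width, from the $\eta$ term, which does not. A second concern is rigor about Gaussianity. $A = W\Psi$ is Gaussian only when $\Psi$ is orthonormal, which holds for the DCT basis. For a learned dictionary, I would instead rely on the RIP assumption stated in the theorem.
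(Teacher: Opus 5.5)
Your proposal matches the paper's proof: the paper also takes the bound on $\|\tilde{\alpha}-\mu\alpha\|_2$ from Theorem~1.4 of Plan and Vershynin and then applies the same triangle inequality, and your Step~2 is a correct sketch of the orthogonal-decomposition and Gaussian-width argument behind that cited result. As you flag yourself, both that argument and the paper's citation actually require Gaussian rows and the constrained estimator $\arg\min_{\|\alpha'\|_1\le\|\mu\alpha\|_1}\|h-\mu b-A\alpha'\|_2$, not merely the RIP and the penalized Eq.~\eqref{eq: l1}; so falling back on the RIP for a learned dictionary would not give the $\sqrt{s\log(d/s)/m}$ rate, though the paper's proof has exactly the same limitation.
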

\begin{proof}
Theorem~\ref{therm: error} follows from Theorem 1.4 of \cite{plan2016generalized}, which provides an upper bound for $\|\tilde{\alpha} - \mu \alpha\|_2$. We then apply the triangle inequality to yield the upper bound for $\|\tilde{\alpha} - \alpha\|_2$ as stated in Eq. \eqref{eq: error_bound}.
\end{proof}



\begin{simplegreybox}
\noindent\textbf{Remark II.}
The first term in Eq. \eqref{eq: error_bound} captures the combined effect of the signal's sparsity and dimension, measurement noise, and residual nonlinearity on recovery error.
It indicates that the recovery error $\varepsilon$ decreases as the number of measurements $m$ increases and $\varepsilon$ increases with the effective dimension of the sparse signal increase.
Here, $s \log(d/s)$ reflects signal complexity, and sparser (smaller $s$) or lower-dimensional signals (smaller $d$) have smaller $s \log(d/s)$.
Furthermore, $\sigma$ and $\eta$ capture the effect of the nonlinear function $f$ on recovery. Larger values of $\sigma$ or $\eta$ correspond to higher variance or stronger nonlinearity of $f$, increasing recovery error.
The second term in Eq. \eqref{eq: error_bound} quantifies the error due to potential mismatch in scaling between $\mu \alpha$ and the true signal $\alpha$.
\end{simplegreybox}
\begin{figure}
    \centering
    \includegraphics[width=\linewidth]{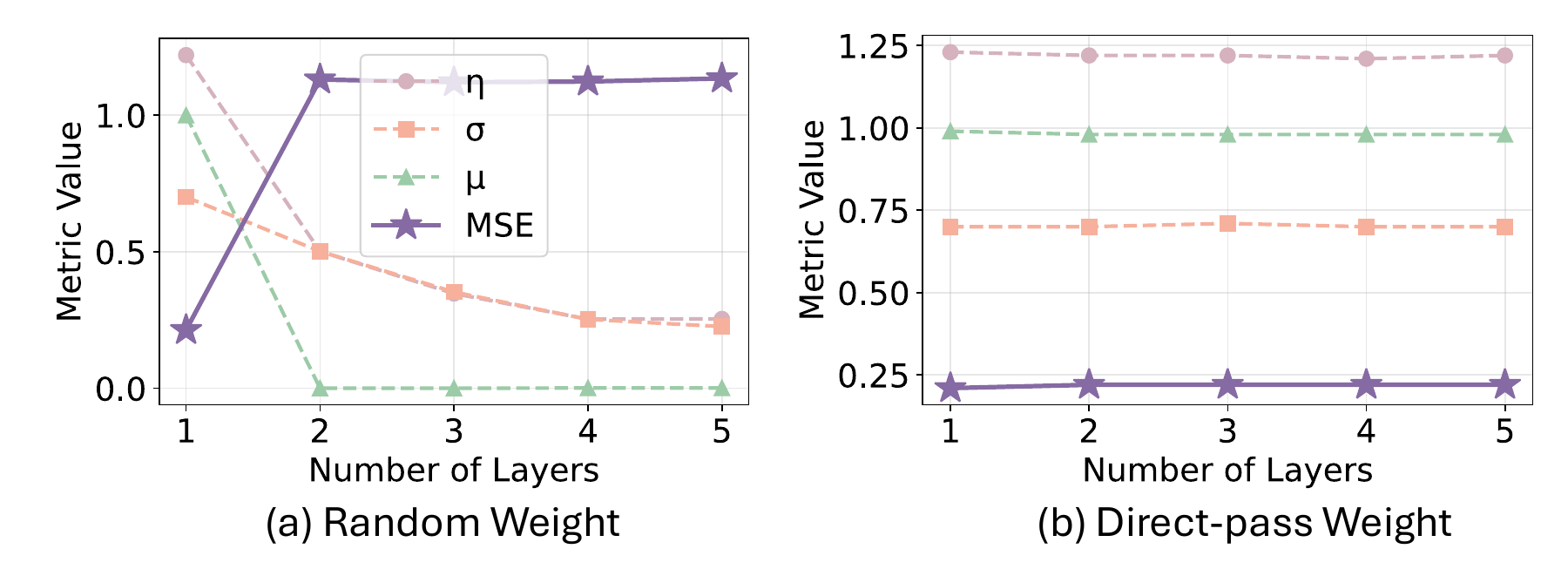}
    \caption{Upper bound of the squared recovery error with varying numbers of convolutional layers.} 
    \label{fig: recovery_error}
\end{figure}
Theorem~\ref{therm: error} indicates that, for a fixed number of measurements $m$, input dimension $d$, and sparsity $s$, the factors influencing recovery error are the nonlinear characteristics of the function $f$, i.e., $\mu$, $\sigma$, and $\eta$, as defined in Definition~\ref{def: nonlinear}.
Building on this observation, we further examine how the network architecture, particularly the number of layers preceding the FC layer, influences these parameters.
As illustrated in Fig.~\ref{fig: recovery_error}, which shows the empirical nonlinear values of $f$, alongside the squared recovery error (MSE) computed from Eq.~\eqref{eq: error_bound}, when the weights of each layer are randomly initialized (left in Fig. \ref{fig: recovery_error}), a network with only a single layer before the FC layer exhibits $\mu \approx 1$, indicating that the linear component of $f$ is reasonably captured and consequently the upper bound of the MSE remains acceptable (0.25).
However, from two layers onward, $\mu$ approaches zero, reflecting the increasingly nonlinear nature of $f$ (due to the activation functions). Meanwhile, the nonlinear residual decreases as the overall variance of $f$ diminishes with depth (due to ReLU masking).
As a result, linear approximations become less accurate, causing an increase in the upper bound of the recovery error. 

To mitigate this issue and enable effective recovery in deeper networks, we manipulate the weights of convolutional layers to control the nonlinearity of the function $f$. Specifically, we observe that a single activation function preserves acceptable linearity, but adding more layers with activation functions significantly increases nonlinearity, making the function harder to invert.
To prevent this accumulation, we adjust the convolutional weights \textit{from the second layer onward} using the direct-pass initialization method, so that these layers direct pass the inputs unchanged to the activation function. This ensures that all activation functions in the network operate on the same input, producing a consistent output rather than compounding nonlinear effects across layers. Practically, we implement direct-pass by setting the central element of each input channel to 1 and all other elements to 0, creating an identity-like mapping that preserves the input through the convolutional operation.
As shown in Fig.~\ref{fig: recovery_error} (right), the parameter $\mu$ remains stable across all layers under the direct-pass initialization, indicating that the network’s linear component is consistently preserved. Consequently, the upper bound of MSE remains stable across deeper layers, indicating the effectiveness of our method on deeper networks. 


\subsection{Multiple Samples Recovery}
\label{sec: muti-sample}
In this section, we extend our recovery method to a batch of samples.
The objective function in Eq.~\eqref{eq: l1} addresses the problem of recovering a single input sample \(x\) from the observed activation \(h\).  
However, when a batch of \(N\) samples is propagated through the network, the activation of neuron \(i\) in an FC layer reflects a weighted combination of contributions from all samples that activate neuron $i$.  
Consequently, Eq.~\eqref{eq: single_fc_recovery} is modified as
\begin{equation}
\label{eq: linear_recovery_batch}
\frac{g_i^{(W)}}{g_i^{(b)}} = \frac{\sum_{n=1}^{N_i} \gamma_i^n h_n}{\sum_{n=1}^{N_i} \gamma_i^n},
\end{equation}
where \(g_i^{(W)}\) and \(g_i^{(b)}\) denotes the weight and bias gradient of neuron \(i\), respectively; \(\gamma_i^n\) is the backpropagated error for neuron \(i\) in sample \(n\), \(h_n\) is the activation of sample \(n\) in layer \(l-1\) (i.e., the input to layer \(l\)), and \(N_i\) is the number of samples that activate neuron \(i\) (see Appendix \ref{app: batch_activation_recovery} for
detailed derivation).
Given \(k\) output neurons, we obtain \(k\) such equations (i.e., Eq.~\eqref{eq: linear_recovery_batch}).  
However, the total number of unknowns is \(N(k + 1)\), as each of $N$ samples contributes one activation variable $h_n$ and \(k\) backpropagation error \(\gamma_i^n\).  
Consequently, recovering all variables from these equations alone is impossible.
Fortunately, our objective is not to solve for all unknowns but only to recover the activations.
To achieve this, we adopt the \emph{imprint method}~\cite{fowlrobbing}, which enforces a structured activation pattern in the FC layer such that each neuron uniquely corresponds to the activation of a single sample.
Specifically, we configure the weight matrix in the FC layer to have identical rows (i.e., $W_i^{(l)}=w^{(l)}$ for all $i$, where $w^{(l)}$ is a constant vector), so each output neuron receives the same weighted combination of inputs.
Based on the distribution of $w^{(l)} h$, we adjust the biases to partition the input space into $k$ bins, maximizing the likelihood that each activation falls into a distinct bin.

\begin{figure}
    \centering
    \includegraphics[width=0.9\linewidth]{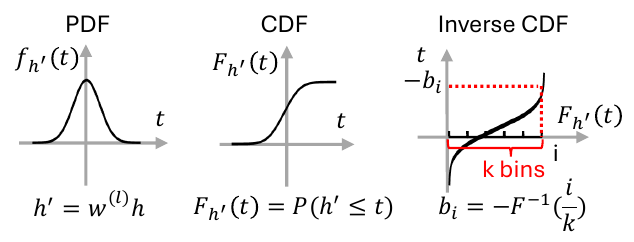}
    \caption{Bias values are set to divide the projected inputs into $k$ equal-probability bins.} 
    \label{fig: cdf}
\end{figure}

As shown in Fig. \ref{fig: cdf}, to divide the input space into \(k\) equal mass bins, we first derive the probability density function (PDF) of \(w^{(l)} h\).
Here, no prior knowledge of the dataset is required; we simply assume that the network input is normalized to follow a standard normal distribution, i.e., $x \sim \mathcal{N}(0,1)$, which is a reasonable assumption for most training setups. Given the value of malicious weight, we can then obtain the corresponding PDF of \(w^{(l)} h\).
Then we compute its cumulative distribution function (CDF) and partition it into \(k\) intervals of equal probability, so that each bin has approximately the same likelihood of containing a sample.
Specifically, let \(F(\cdot)\) denote the CDF of \(w^{(l)} h\). To partition the space into \(k\) equal-probability bins, we set the bias of each neuron as
\begin{equation}
\label{eq: set_bin}
b_i = -F^{-1}\Bigl( \frac{i}{k} \Bigr), \quad i = 1, \dots, k.
\end{equation}
where $F^{-1}$ is the inverse CDF, which maps uniform probability intervals back to the original value space.

Once clients train on the configured weights and biases, each activation $h_n$ is likely to fall into a distinct bin.
When an activation $h_n$ enters a bin, e.g., the interval $[-b_i,-b_{i+1})$, it activates neuron $i$ as well as all neurons associated with larger biases.
This yields a triangular (or progressive) activation pattern, where among the activated neurons, the one with the smallest bias is activated by a single sample, followed by the next neuron, which is activated by two samples, and so on.
This pattern naturally motivates a recursive elimination strategy in which we first invert the neuron activated by a single sample, subtract its contribution from the remaining neurons, and then iteratively isolate the activations of all other samples.
Formally, individual activations can be recovered by iteratively solving
\begin{equation}
\label{eq: objective} 
h_n=\frac{\gamma_i^n g_{i}^{(W)} - \gamma_{i+1}^n g_{i+1}^{(W)}}
{\gamma_i^n g_{i+1}^{(b)} - \gamma_{i+1}^n g_{i+1}^{(b)}}.
\end{equation}
However, Eq.~\eqref{eq: objective} cannot be solved directly because the values \( \gamma_i^n \) are unknown. 
A straightforward approach is to assume that all $\gamma_i^n$ are identical for each sample, i.e., $\gamma_i^n = \gamma^n$ for all $i$, which allows $\gamma^n$ to be eliminated from the equation.
This can be achieved if the next layer has identical columns in the weight matrix \cite{fowlrobbing}, i.e., \(W_{j}^{(l+1)}=w^{(l+1)} \) for all \( j \) columns (see Appendix~\ref{app: identical_columns} for a detailed derivation).
However, this constraint can be further relaxed by leveraging reasonable auxiliary knowledge, assuming that the attacker can infer the ground-truth label once the sample gets recovered.
In this case, the full gradient signal \( \{\gamma_i^n\}_{i=1}^k \) can be computed by feeding the recovered samples and their corresponding labels into the network. This approach is useful when only a single FC layer is present in the model.
Once the attacker obtains individual activations, it solves Eq. \eqref{eq: l1} to recover all $\tilde{\alpha}$ and reconstruct $\tilde{x}$ for the batch.

\subsection{Attack Implementation and Recovery Rate}
\label{sec: final_summary}
In this section, we illustrate the implementation of our attack under CNN and MLP-based networks.
Our attack for the CNN-based network is summarized in Algorithm~\ref{alg:attack_CNN}. Specifically, in the preparation stage, the attacker first initializes the convolutional kernels: the first kernel is initialized using a Gaussian random distribution (line~2), while the remaining kernels are initialized using the direct-pass method (line~3). For the FC layer, the weight matrix $W^{(l)}$ is initialized with identical rows (line~4). Based on $W^{(l)}$, the attacker derives the CDF of the projection values (line~5) and assigns the biases as the negative quantiles of this distribution, effectively disentangling the contribution of samples (lines~6--8).
In the inference stage, the attacker first reconstructs the set of activations $\mathcal{H}$ from the gradients in the FC layer using linear layer leakage (line~10). Then, for each activation $h_n \in \mathcal{H}$, the attacker computes a sparse coefficient vector $\tilde{\alpha}_n$ (line~12). Finally, the training sample $\tilde{x}_n$ is recovered by projecting $\tilde{\alpha}_n$ back into the input space using the sparse basis $\Psi$ (line~13). 
\begin{algorithm}[!h]
\caption{ARES for CNN Network.}
\label{alg:attack_CNN}
\begin{algorithmic}[1]
\renewcommand{\algorithmicrequire}{\textbf{Input:}}
\renewcommand{\algorithmicensure}{\textbf{Output:}}

\Require Weight gradient $g^{(W)}$ and bias gradient $ g^{(b)}$,  sparse basis $\Psi$, FC layer $l$ with $k$ output neurons
\Ensure A set of recovered training samples $\tilde{\mathcal{X}}$

\State \textbf{// Attacker Preparation}
\State $K^{(1)} \gets$ \text{Gaussian random weight}  \Comment{first conv layer}
\State $K^{(2)}, K^{(3)}, \dots, K^{(l-1)} \gets \text{direct-pass weight}$ \Comment{remaining conv layers}
\State $W^{(l)} \gets$ \text{identical rows}  \Comment{FC layer weights}
\State $F \gets$ \text{estimate CDF of } $w^{(l)} h$
\For{$i = 1, 2, \dots, k$}
    \State $b_i \gets -F^{-1}(i/k)$  \Comment{FC layer biases}
\EndFor

\State \textbf{// Attacker Inference}
\State $\mathcal{H} \gets$ \text{linear layer leakage via Eq.~\eqref{eq: objective}}
\Comment{activations}
\For{$n = 1$ \textbf{to} $|\mathcal{H}|$}
    \State $\tilde{\alpha}_n \gets$ \text{get sparse vector from $h_n$ using Eq.~\eqref{eq: l1}}
    \State $\tilde{x}_n \gets \Psi \, \tilde{\alpha}_n$  \Comment{recover sample from sparse vector}
    \State $\tilde{\mathcal{X}} \gets \tilde{\mathcal{X}} \cup \{ \tilde{x}_n \}$ 
\EndFor

\end{algorithmic}
\end{algorithm}

Our attack for MLP is summarized in Algorithm~\ref{alg:attack_mlp}. Specifically, in the preparation stage, the attacker first initializes the weights of the first FC layer using a Gaussian random distribution (line~2). Based on $W^{(1)}$, the attacker estimates the CDF of the projection values (line~3) and assigns the biases $b_i^{(1)}$ of the first FC layer as the negative quantiles of this distribution (lines~4--6). For the second FC layer, the weight matrix $W^{(2)}$ is initialized with identical rows (line~7), and the corresponding biases $b_i^{(2)}$ are computed similarly using the estimated CDF from $W^{(2)}$ and the activations $h^{(1)}$ (lines~8--10).
In the inference stage, the attacker first reconstructs the set of samples $\tilde{\mathcal{X}}^{(1)}$ from the first FC layer using linear layer leakage (line~13). 
Then it conducts linear layer leakage again on the second FC layer and gets a set of activations $\mathcal{H}^{(1)}$ (line~14). For each activation $h_n^{(1)} \in \mathcal{H}^{(1)}$, a sparse coefficient vector $\tilde{\alpha}_n$ is computed (line~16), and the corresponding input sample $\tilde{x}_n^{(2)}$ is reconstructed by projecting $\tilde{\alpha}_n$ back into the input space using the sparse basis $\Psi$ (line~17). Each recovered sample is appended to the set $\tilde{\mathcal{X}}^{(2)}$, and finally,  $\tilde{\mathcal{X}}^{(1)}$ and $\tilde{\mathcal{X}}^{(2)}$ are combined to obtain the full recovered samples $\tilde{\mathcal{X}}$ (line~20).

\begin{algorithm}[!h]
\caption{ARES for MLP Network}
\label{alg:attack_mlp}
\begin{algorithmic}[1]
\renewcommand{\algorithmicrequire}{\textbf{Input:}}
\renewcommand{\algorithmicensure}{\textbf{Output:}}

\Require Weight and bias gradient $g^{(W)}$, $ g^{(b)}$, sparse basis $\Psi$
\Ensure A set of recovered training samples $\tilde{\mathcal{X}}$

\State \textbf{// Attacker Preparation}
\State $W^{(1)} \gets \text{Gaussian random weight}$  \Comment{first FC layer}
\State $F^{(1)} \gets \text{estimate CDF of } {w^{(1)} x}$
\For{$i = 1, 2, \dots, k^{(1)}$}
    \State $b_i^{(1)} \gets -\bigl(F^{(1)}\bigr)^{-1}\Bigl(\frac{i}{k^{(1)}}\Bigr)$  
    \Comment{first FC layer biases}
\EndFor
\State $W^{(2)} \gets \text{identical rows}$  \Comment{second FC layer}
\State $F^{(2)} \gets \text{estimate CDF of } {w^{(2)} h^{(1)}}$
\For{$i = 1, 2, \dots, k^{(2)}$}
    \State $b_i^{(2)} \gets -\bigl(F^{(2)}\bigr)^{-1}\Bigl(\frac{i}{k^{(2)}}\Bigr)$  
    \Comment{second FC layer biases}
\EndFor

\State \textbf{// Attacker Inference}
\State $\tilde{\mathcal{X}}^{(1)} \gets$ \text{first linear layer leakage} 
\Comment{samples}
\State $\mathcal{H}^{(1)} \gets$ \text{second linear layer leakage}
\Comment{activations}
\For{$n = 1$ \textbf{to} $|\mathcal{H}^{(1)}|$}
    \State $\tilde{\alpha}_{n} \gets$ \text{get sparse vector from $h^{(1)}_n$ using Eq.~\eqref{eq: l1}}
    \State $\tilde{x}^{(2)}_n \gets \Psi \, \tilde{\alpha}_{n}$  \Comment{recover input from sparse vector}
    \State $\tilde{\mathcal{X}}^{(2)} \gets \tilde{\mathcal{X}}^{(2)} \cup \{ \tilde{x}^{(2)}_n \}$ 
\EndFor
\State $\tilde{\mathcal{X}} \gets \tilde{\mathcal{X}}^{(1)} \cup \tilde{\mathcal{X}}^{(2)}$


\end{algorithmic}
\end{algorithm}


In the following, we provide the expected recovery rate for both the one-layer and two-layer recovery.
Intuitively, the expected number of recovered samples can be derived by enumerating all possible assignments of $N$ samples into $k$ equal-mass bins. A sample is considered recovered if it occupies a bin alone, since in this case the linear-layer inversion can uniquely identify that sample.
Formally, the expected recovery rate for a single FC layer is given by
\begin{align}
E(N, k) =\ & \frac{1}{\dbinom{k + N - 1}{k - 1}} \sum_{i=1}^{N - 2} i \binom{k}{i}
\nonumber \\
& \times \sum_{j=1}^{\left\lfloor \frac{N - i}{2} \right\rfloor}
\binom{k - i}{j} \binom{N - i - j - 1}{j - 1}
+ r(N, k),
\label{eq:expected_proportion}
\end{align}
where $k$ is the number of output neurons and $N$ is the number of samples in the batch, and $r(n,k)$ captures special configurations, e.g., samples didn't fall into any bin (cf. RtF \cite{fowlrobbing}).
Let $p_1= E(N, k^{(1)})/{N}$ denote the proportion of samples successfully recovered in the first layer.
The proportion of samples successfully recovered second layer is $p_2= E(N(1-p_1), k^{(2)})/{N(1-p_1)}$. Consequently, the total number of samples recovered across the two layers is $N \left[ p_1 + (1 - p_1) p_2 \right]$.
As illustrated in Fig.~\ref{fig:k-n}, increasing the number of output neurons results in a higher expected recovery rate. Moreover, adding a second FC layer yields a substantial performance improvement; for instance, with 1024 output neurons per layer, a two-layer configuration can recover over 80\% of the samples in a batch of 384.
\begin{figure}
    \centering   \includegraphics[width=\linewidth]{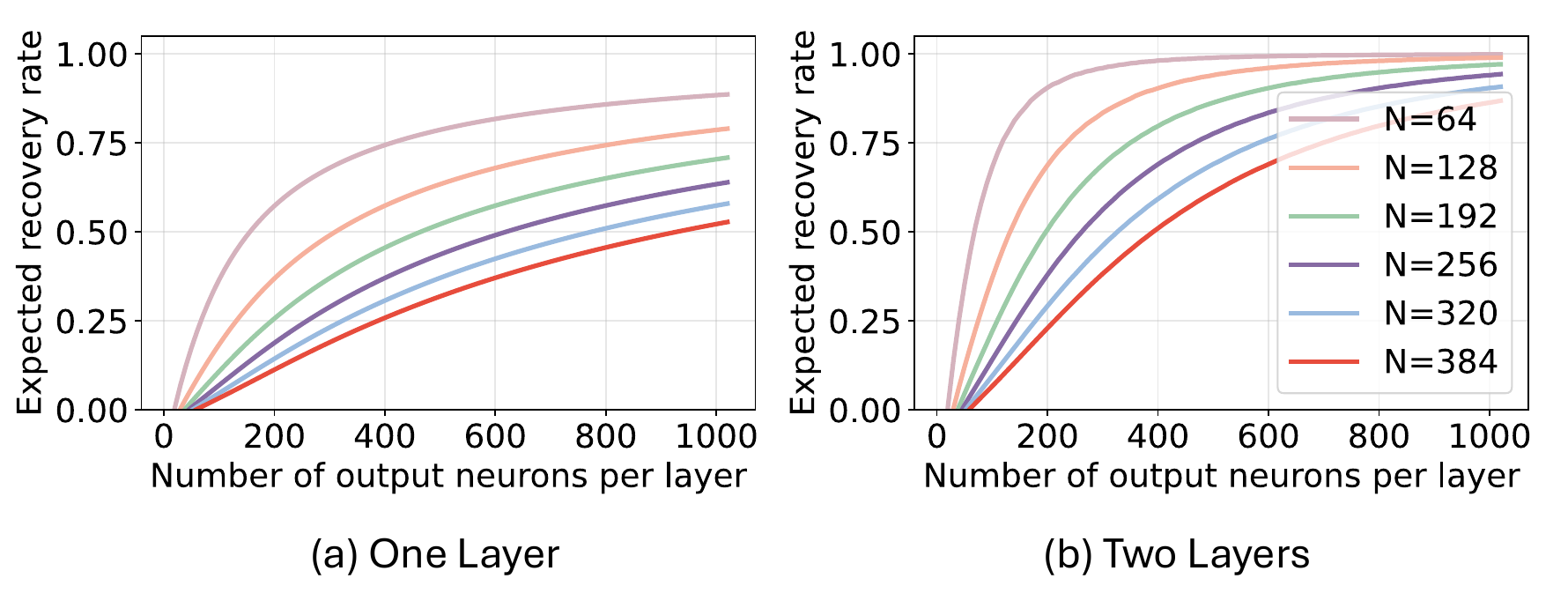}
    \caption{Expected recovery rate with different numbers of output neurons ($k$) and batch size ($N$). }
    \label{fig:k-n}
\end{figure}




\section{Experiments}
We conduct extensive experiments to evaluate the effectiveness of ARES. Section~\ref{sec:effect_comparsion} compares ARES with SOTA attacks on CNN- and MLP-based networks. Section~\ref{sec:effect_defense} evaluates ARES under gradient perturbation, data augmentation, and secure aggregation defenses. Section~\ref{sec:diverse} explores ARES on text and audio data, non-IID scenarios, FedAvg, and asynchronous FL settings.
\subsection{Experimental Setup}
\noindent\textbf{Dataset.}
We use five image datasets, including MNIST \cite{lecun1998mnist}, CIFAR-10 \cite{krizhevsky2009learning}, ImageNet \cite{deng2009imagenet}, HAM10000 \cite{tschandl2018ham10000}, Lung-Colon Cancer \cite{borkowski2019lung}, one text dataset, i.e., Wikitext dataset \cite{merity2016pointer}, and one audio dataset, i.e., AudioMNIST \cite{becker2024audiomnist} to evaluate the effect of our attack.

\noindent\textbf{Evaluated Networks.} 
We adopt two representative network architectures: a 4-layer CNN, which consists of convolutional layers followed by FC layers, and a 4-layer MLP, which comprises four FC layers. 
The detailed descriptions for networks are provided in Table~\ref{table:network} in the Appendix.

\noindent\textbf{Evaluation Metric.} We employ four metrics to assess the effectiveness of our attack, including PSNR (higher is better), MSE (lower is better), LPIPS (lower is better), and recovery rate (higher is better). 
For PSNR, MSE, and LPIPS, we report the averaged value across the batch.
The detailed descriptions for each metric are provided in Appendix~\ref{app: eval}.

\noindent\textbf{Compared Attacks.}
We compare our methods with nine state-of-the-art GIAs, including iDLG \cite{zhao2020idlg}, InvertingGrad (IG) \cite{geiping2020inverting}, GradInversion (GI) \cite{yin2021see}, FedLeak \cite{fan2025boosting}, Fishing \cite{wen2022fishing}, Robbing (RtF) \cite{fowlrobbing}, Trap Weight (TW) \cite{boenisch2023curious}, LOKI \cite{zhao2024loki} and Scale-MIA \cite{shi2023scale}.
To implement RtF and TW in a CNN-based network, we adopt the approach proposed in TW \cite{boenisch2023curious}, which uses the direct-pass initialization method for convolutional layers to avoid value distortion.
Our implementation for each method is based on the open-source code\footnote{\url{https://github.com/lhfowl/robbing_the_fed}, \url{https://github.com/JonasGeiping/breaching}, \url{https://github.com/unknown123489/Scale-MIA}.}.
The detailed descriptions for each attack are provided in Appendix~\ref{app: attacks}.

\noindent\textbf{Evaluated Defenses.}
We consider three categories of defenses. First, we evaluate three gradient perturbation–based approaches, including differential privacy (DP) \cite{geyer2017differentially}, gradient quantization \cite{yue2023gradient}, and gradient sparsification \cite{yue2023gradient}. Second, we consider a data augmentation–based defense, ATS \cite{gao2023automatic}, implemented using public code\footnote{\url{https://github.com/gaow0007/ATSPrivacy}}
. Finally, we examine secure aggregation–based defenses \cite{bonawitz2017practical,fereidooni2021safelearn}. 
The detailed explanations of each defense are provided in Appendix~\ref{app: defenses}.




\subsection{Performance Comparison with Baselines}
\label{sec:effect_comparsion}
\noindent\textbf{Comparison on CNN-based Networks.}
Table~\ref{tab:attack-impact} demonstrates that our attack consistently outperforms SOTA GIAs across all datasets and practical batch sizes. We leave the visual illustration of the recovery effect in Appendix~\ref{app: visual} (Fig.~\ref{fig:visual}).
For fairness, we focus on comparison with RtF and TW, as both are active attacks that leverage the linear layer leakage technique.
\begin{figure}
    \centering   \includegraphics[width=0.9\linewidth]{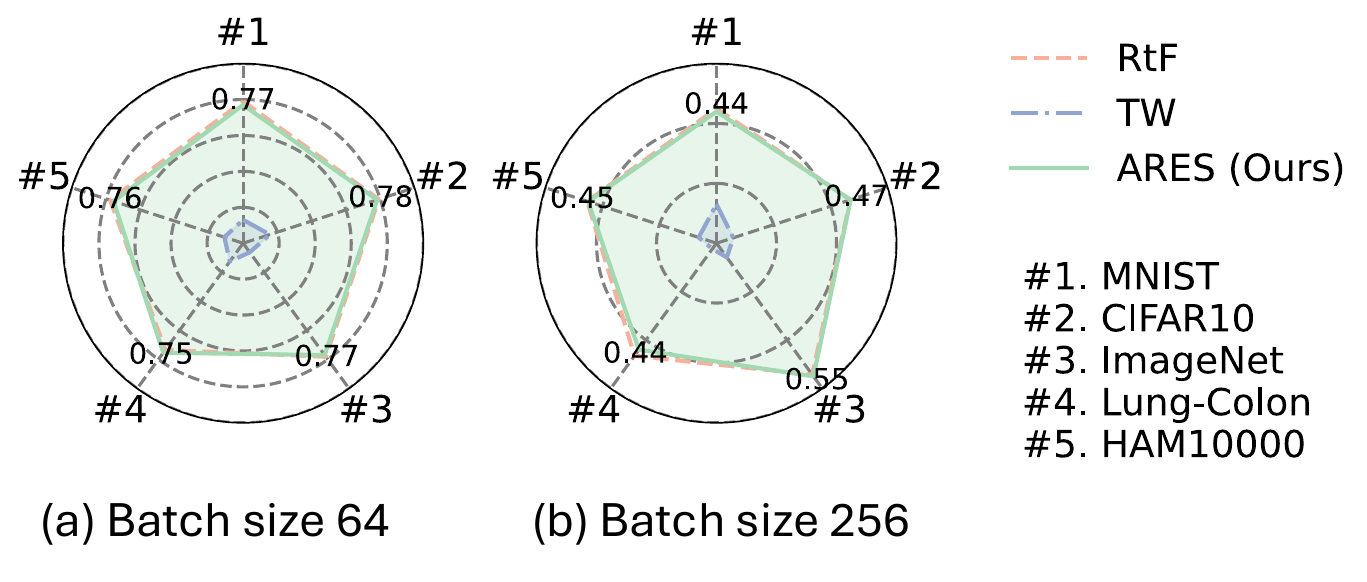}
    \caption{Recovery rate comparison in CNN.}
    \label{fig:recovery_rate_exp}
\end{figure}
As shown in Fig.~\ref{fig:recovery_rate_exp}, both RtF and ARES achieve a higher recovery rate than TW, with improvements of {6.58~$\times$ and 6.57~$\times$ for a batch size of 64, and 7.14~$\times$ and 7.05~$\times$} for a batch size of 256, respectively. This improvement is attributed to the use of the imprint method, which maximizes the likelihood of assigning each sample to distinct bins, thereby enhancing the probability of separating samples within a batch.
However, even at a comparable recovery rate, ARES achieves a significantly higher PSNR than RtF, with an average improvement of 4.8~$\times$.
This improvement stems from our method’s ability to address the key challenge of reconstructing activations back to the original training samples. Although initializing convolutional kernels using the direct-pass method can partially mitigate value distortion, ReLU activations in preceding layers still mask out almost half of the signals. 

We further compare the empirical PSNR for single image with the theoretical bound provided by Theorem~\ref{therm: error}. Theorem~\ref{therm: error} establishes an upper bound on the $\ell_2$ distance between the estimated and ground-truth signals, which translates into a lower bound on PSNR for image reconstruction. Under the ARES design, the theoretical MSE upper bound is around 0.25, corresponding to a PSNR lower bound of 54. In practice, the PSNR for a single image can exceed 100, which remains well within the guarantee provided by the theorem. This gap reflects the conservative nature of the bound, as it holds for any $s$-sparse vector, including worst-case or adversarially constructed signals, whereas real images typically exhibit smooth regions and structured patterns that facilitate easier reconstruction.

\begin{table*}[ht]
\centering
  \caption{PSNR comparison of state-of-the-art attacks versus our ARES across different batch sizes and datasets. For each row, the best-performing attack is highlighted in bold. \xmark~indicates that the attack fails to get visually meaningful images.}
  \label{tab:attack-impact}
\setlength{\tabcolsep}{6pt}
\renewcommand{\arraystretch}{1.2}
\begin{tabular}{c |c|c c c c c c c c}
\toprule
Dataset & Batch & iDLG & IG  & GI  & FedLeak  & Finshing$^*$  & RtF  & TW  & ARES (Ours)\\
\midrule
\multirow{3}{*}{MNIST} 
 & 32  & 9.39   & 9.96 & 11.17 & 21.57  & 12.40  & 17.53  &13.89 & \textbf{105.29}  \\
 & 64  &  9.27   & 9.88 & 10.94 & 21.12   &  12.58 &  16.68 & 13.02& \textbf{94.16}  \\
 & 256 & \xmark    & \xmark    & \xmark    & \xmark            &  12.40& 13.08 & 11.87 & \textbf{42.42}  \\
\midrule
\multirow{3}{*}{CIFAR-10} 
 & 32  & 9.51 & 11.28 & 10.59 & 21.33   &12.08 & 17.35 & 13.55 & \textbf{92.27} \\
 & 64  & 8.52 & \xmark  & 10.01 & 17.20  & 12.36& 17.20 &  13.03     & \textbf{90.47} \\
 & 256 & \xmark    & \xmark  & \xmark     & \xmark      & 12.36 & 15.08 &    13.51   & \textbf{37.08} \\
\midrule
\multirow{3}{*}{ImageNet} 
 & 32  & \xmark    & 11.45 & 8.06  & 19.07 & 12.71&  16.58 & 12.92 & \textbf{104.89} \\
 & 64  & \xmark    & 10.75 & \xmark           & 19.01 & 12.68&  15.20 & 12.59 & \textbf{90.72}  \\
 & 256 & \xmark    & \xmark  & \xmark           & \xmark     & 12.49&  14.86 & 12.56 & \textbf{41.61}  \\
\midrule
\multirow{3}{*}{HAM10000} 
 & 32  & \xmark & \xmark & \xmark & 15.32 &12.67   & 15.32 & 15.29  & \textbf{120.93} \\
 & 64  & \xmark & \xmark & \xmark & 22.16 & 12.52  & 15.01  & 14.39  & \textbf{69.96}  \\
 & 256 & \xmark & \xmark & \xmark & \xmark     & 12.48  & 14.97  & 14.88  & \textbf{37.72} \\
\midrule
\multirow{3}{*}{Lung-Colon} 
 & 32  & \xmark & \xmark & \xmark & 17.73 & 12.88  & 17.10 & 14.36  & \textbf{106.64} \\
 & 64  & \xmark & \xmark & \xmark & 16.06 & 12.41  & 16.64  & 14.20  & \textbf{67.37}  \\
 & 256 & \xmark & \xmark & \xmark & \xmark     & 12.27  & 15.06  & 13.84 & \textbf{33.12}  \\
\bottomrule
\end{tabular}
\begin{tablenotes}
\item * Fishing column reports the PSNR for a single sample; others present the average PSNR across all samples in the batch.
\end{tablenotes}
\end{table*}

\noindent\textbf{Comparison on MLP-based Networks.}
\begin{figure}
    \centering   \includegraphics[width=0.9\linewidth]{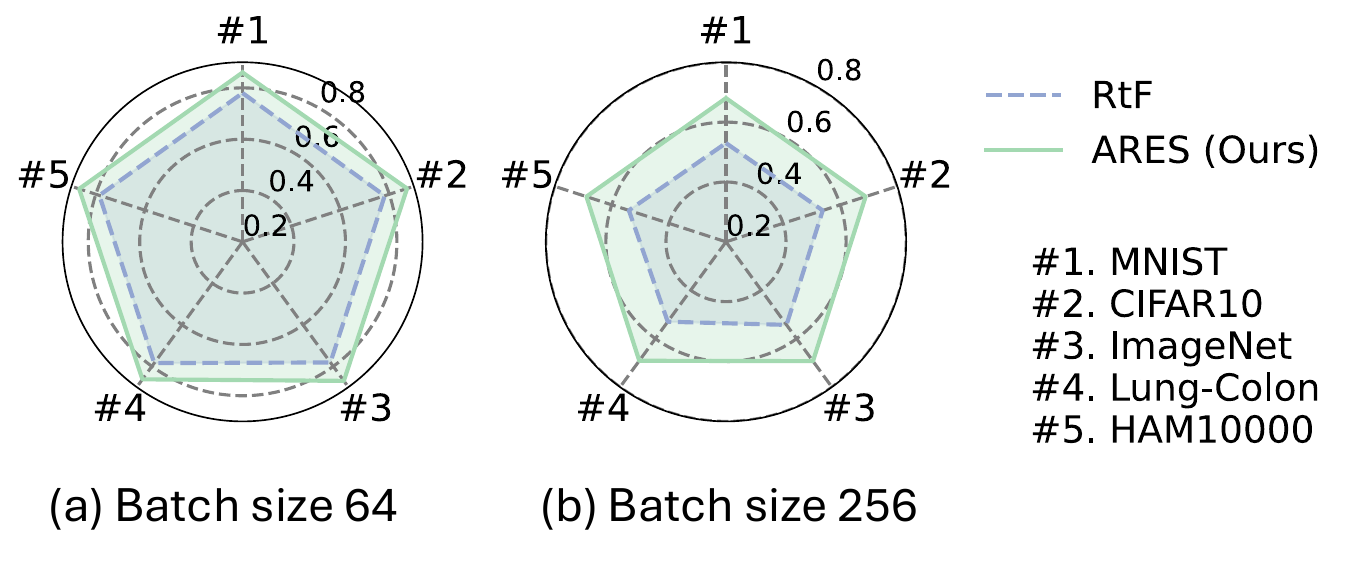}
    \caption{Recovery rate comparison in MLP. }
    \label{fig:recovery_rate_exp_mlp}
\end{figure}
To evaluate our attack on an MLP-based network, we primarily compare our method with RtF, which is currently the strongest active attack. As shown in Fig.~\ref{fig:recovery_rate_exp_mlp}, ARES improves the recovery rate by 10.75\% and 28.34\% for batch sizes of 64 and 256, respectively.
This improvement arises from our method’s ability to perform second-layer separation.

\noindent\textbf{Theoretical and Empirical Recovery Rates.}
We compare the expected recovery rates derived from Eq.~\eqref{eq:expected_proportion} with the empirical recovery rates obtained by averaging the results across the datasets reported in Table~\ref{tab:attack-impact}. As shown in Fig.~\ref{fig:expected&experiment_recovery_rate}, the empirical results closely follow the expected values across different settings, with an average deviation of 3.5\%.

\begin{figure}
    \centering   \includegraphics[width=\linewidth]{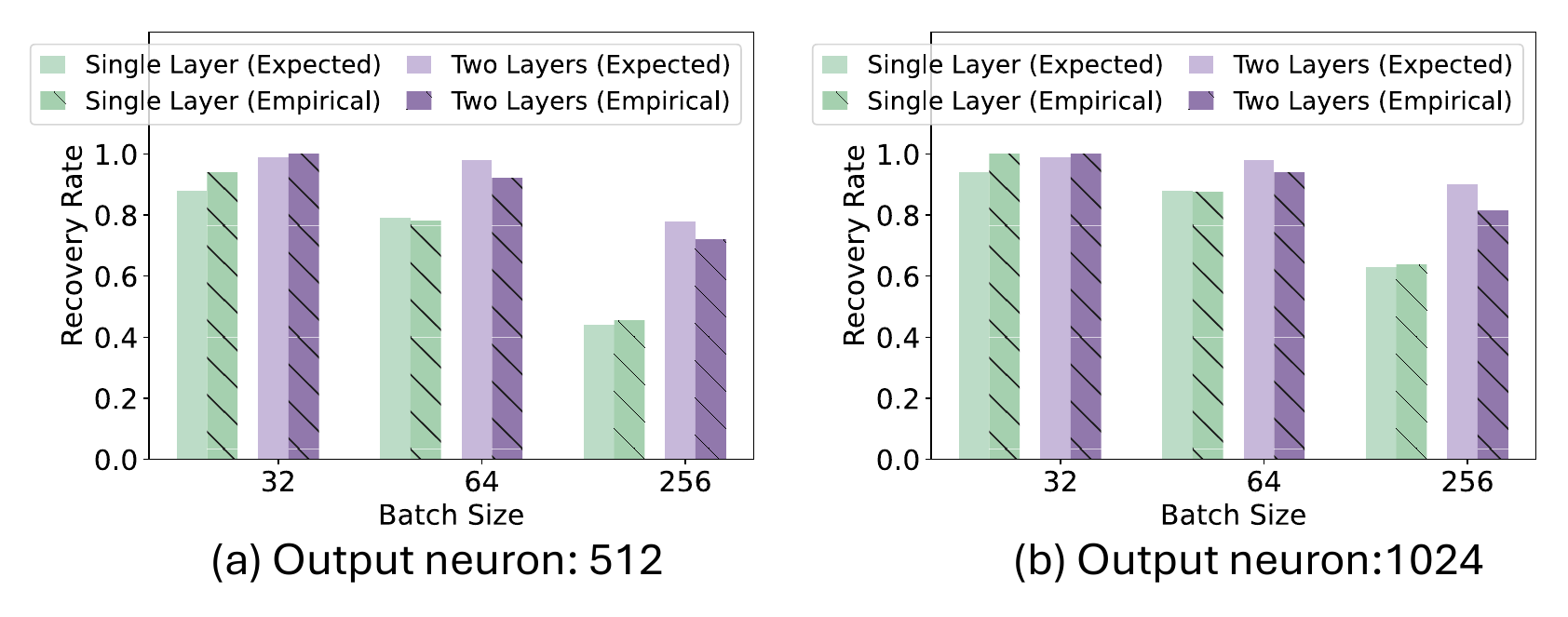}
    \caption{Expected vs. empirical recovery rates.}
    \label{fig:expected&experiment_recovery_rate}
\end{figure}

\subsection{Performance Under Different Defenses}
\label{sec:effect_defense}
\noindent\textbf{Attack Against Gradient Perturbation-based Defense.}
Fig.~\ref{fig:defense_quant} shows the performance of ARES under gradient quantization defense with a batch size of 32 on the ImageNet dataset.
For fairness, we compare only with RtF, which is the strongest baseline according to Table~\ref{tab:attack-impact}.
As shown, ARES consistently outperforms RtF on both CNN and MLP networks, achieving PSNR improvements of 5× and 1.35×, respectively.
As the quantization bit increases, the reconstruction quality improves. 
Fig.~\ref{fig:defense-sparse} shows the performance of ARES under gradient sparsification defense with a batch size of 32 on the ImageNet dataset. Here, the density denotes the fraction of gradient retained after sparsification. 
ARES consistently outperforms RtF on both CNN and MLP networks, achieving PSNR improvements of 3~$\times$ and 1.12~$\times$, respectively.
As this density increases, the reconstruction quality improves.
Fig.~\ref{fig:defense-dp} shows the performance of ARES under different DP noise with a batch size of 32 on the ImageNet dataset.
Following prior works~\cite{fowlrobbing,zhao2024loki}, we apply the Laplace mechanism with varying privacy budgets $\varepsilon$. ARES consistently outperforms RtF on both CNN and MLP networks, achieving PSNR improvements of 2.5~$\times$ and 1.14~$\times$, respectively. At $\varepsilon = 10$, the recovered images achieve PSNR $\approx 20$. Further reducing the privacy budget leads to a significant drop in training accuracy.
We leave the visual effect in Appendix~\ref{app: visual} (Fig.~\ref{fig:visual_quant}, Fig.~\ref{fig:visual_sparse} and Fig.~\ref{fig:visual_dp}).
\begin{figure}
    \centering   \includegraphics[width=\linewidth]{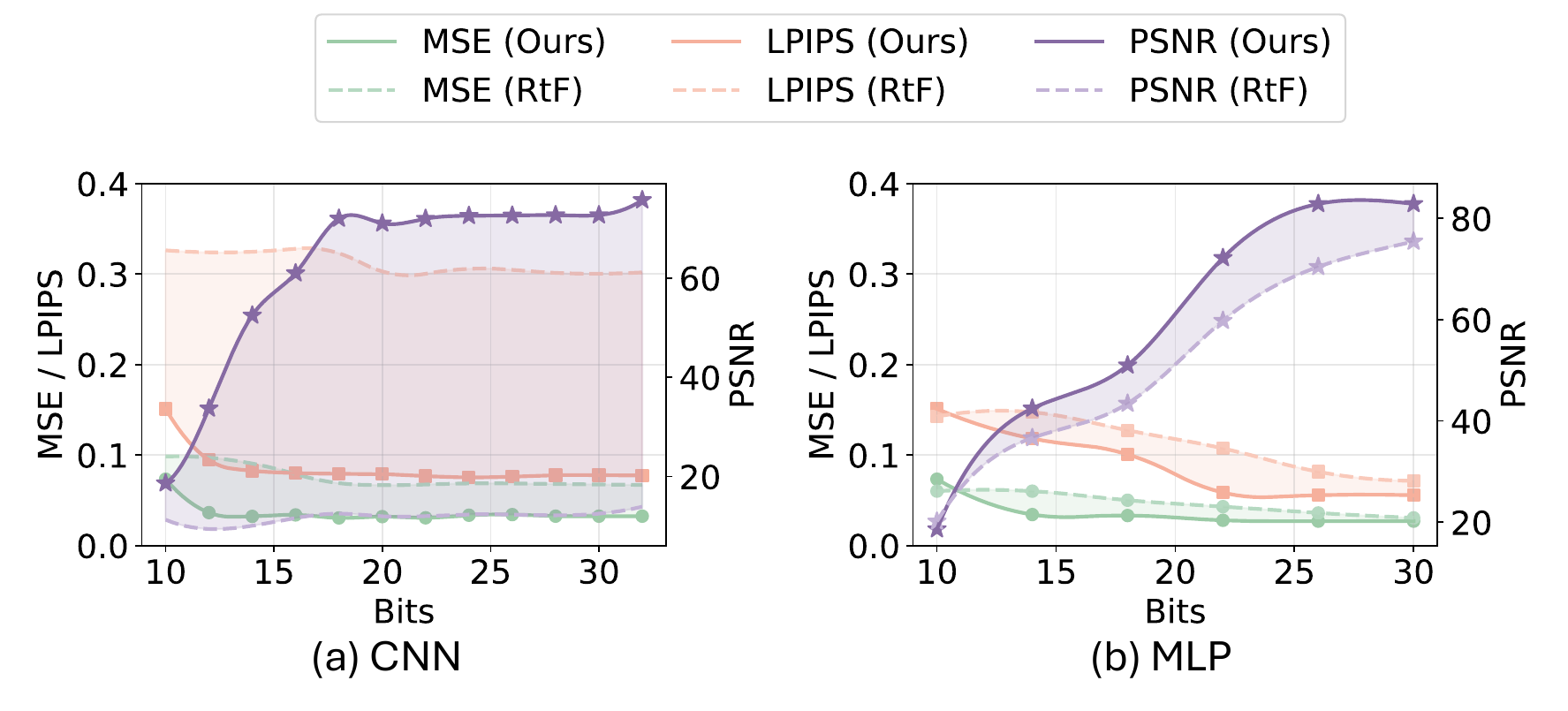}
    \caption{Our attack under gradient quantization defense.}
    \label{fig:defense_quant}
\end{figure}

\begin{figure}
    \centering   \includegraphics[width=\linewidth]{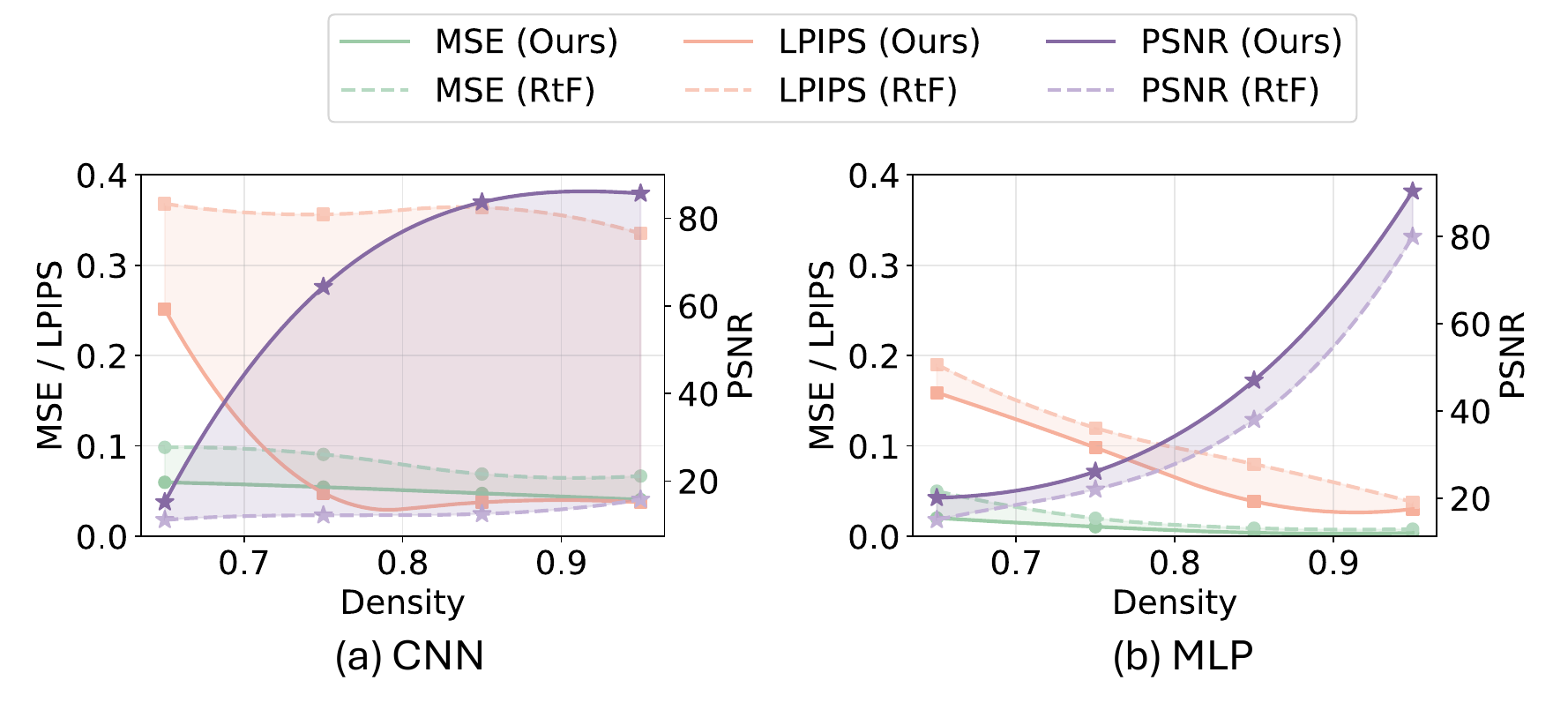}
    \caption{Our attack under gradient sparsification defense.}
    \label{fig:defense-sparse}
\end{figure}
\begin{figure}
    \centering   \includegraphics[width=\linewidth]{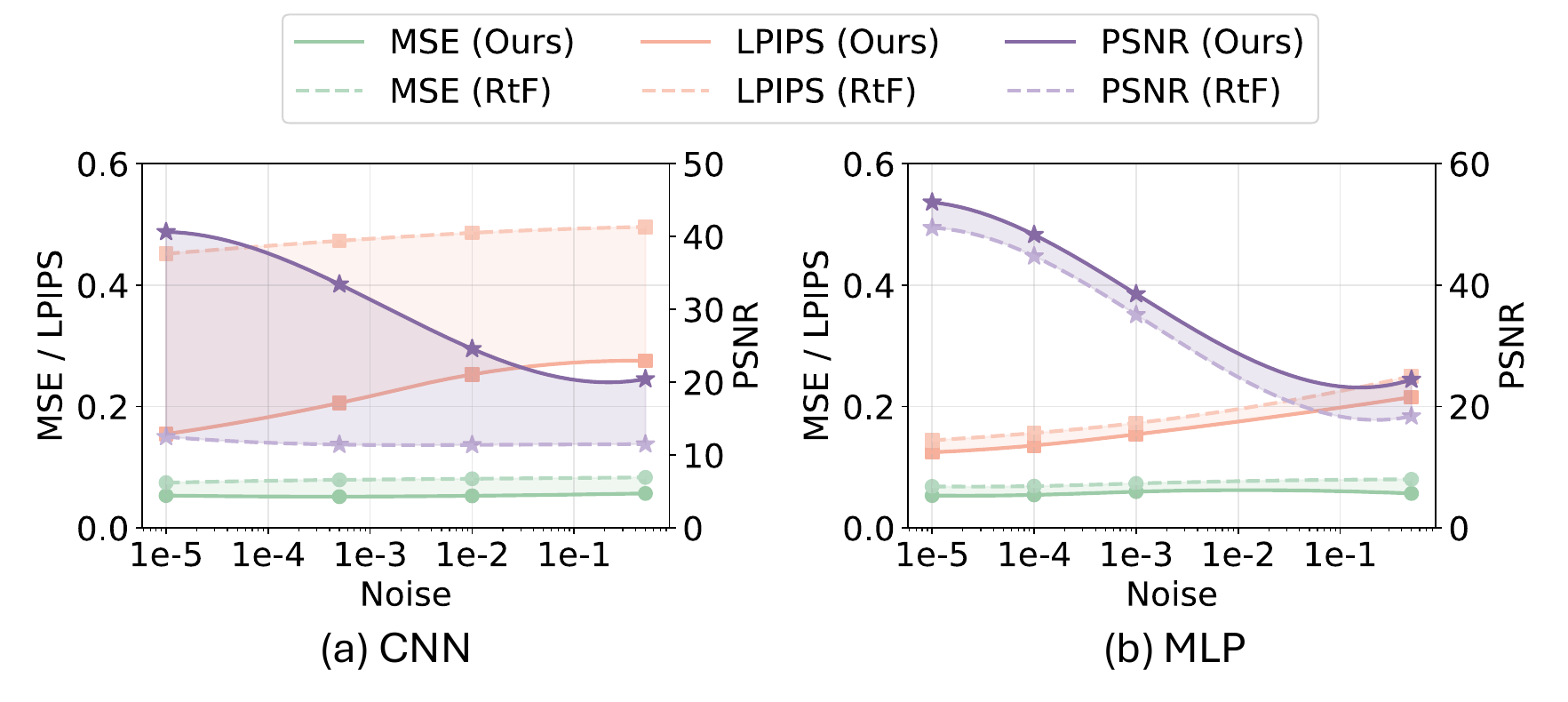}
    \caption{Our attack under differential privacy noise.}
    \label{fig:defense-dp}
\end{figure}

\noindent\textbf{Attack Against Data Augmentation-based Defense.}
We evaluate our ARES against ATS \cite{gao2023automatic}, a data augmentation–based defense that aims to learn the optimal augmentation policy to prevent the reconstruction of both original and transformed training samples, while preserving model utility. 
Our attack achieves an average PSNR of 86.8 (comparing reconstructed samples with the transformed samples) on the CIFAR-10 dataset with a batch size of 32. 
We leave the visual illustration of the augmented samples and the recovered samples in Appendix~\ref{app: visual} (Fig.~\ref{fig:visual_aug}).

\noindent\textbf{Attack Against Secure Aggregation Defense.}
A commonly used strategy to bypass secure aggregation is to exploit model inconsistency \cite{pasquini2022eluding,zhao2024loki}, where the server sends different model parameters (within the same model architecture) to each client, preventing the mixing of gradients from different clients.
One such approach is LOKI \cite{zhao2024loki}, which leverages model inconsistency to perform GIA under secure aggregation defense. To evaluate our method under the same defense, we adopt LOKI’s setup and deliberately manipulate convolutional weights to introduce model inconsistency, preventing the clients’ weight gradients mixing during secure aggregation.
Because ARES is orthogonal to LOKI (i.e., model inconsistency method), integrating ARES into the LOKI setup yields complementary attack capabilities.
In our experiment, each client receives a model in which only a client-unique subset of kernels (e.g., three per client) is initialized with the malicious weight (lines 2--3 in Algorithm 1), while all remaining kernels are set to zero. This design keeps each client’s weight gradients in the FC layer unmixed thus improve the recovery rate.
We evaluate this approach using 10 clients per training round.
As shown in Fig.~\ref{fig:se-agg}, our method achieves an average PSNR of 53.7 and 51.9 on CNN-based networks with global batch sizes (i.e., local batch size $\times$ number of clients per round) of 320 and 640, respectively, outperforming LOKI by 7.3~$\times$ and 7.1~$\times$.
The higher PSNR is achieved by addressing the challenge of reconstructing training samples from activations with minimal information loss.
Although LOKI can initialize the convolutional kernels using the direct-pass method to reduce value distortion, ReLU activations still suppress half of the inputs to the FC layer, resulting in information loss.

\begin{figure}
    \centering   \includegraphics[width=0.9\linewidth]{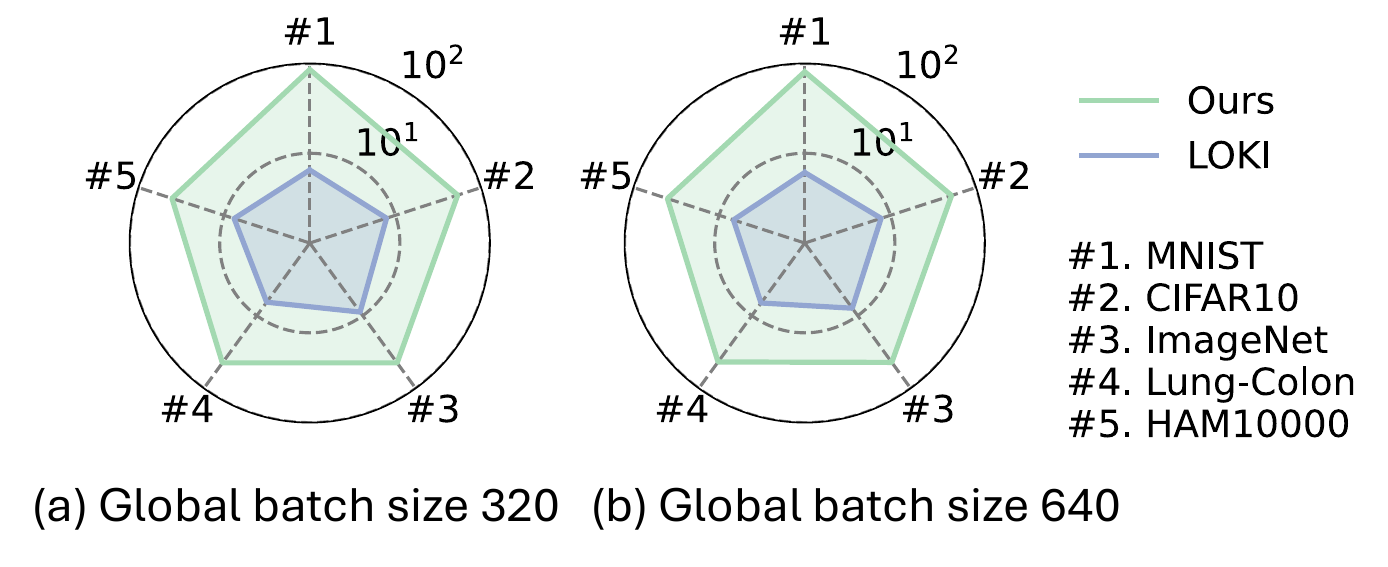}
    \caption{PSNR of LOKI and ours under secure aggregation-based defenses. The radial axis is log-scaled for better visualization.}
    \label{fig:se-agg}
\end{figure}
\begin{table}[h]
\centering
\begin{tabular}{c|cccc}
\toprule
    Global batch size & 32 & 64 & 128 & 256 \\
\midrule
Scale-MIA & 28.52 & 28.53 & 28.26 & 27.06 \\
ARES & 92.27 & 90.47 & 85.01 & 37.08 \\
\bottomrule
\end{tabular}
\caption{PSNR comparison of Scale-MIA and ARES on CIFAR-10 dataset in CNN.}
\label{tab:scale-mia}
\end{table}
We also compare ARES with an attack that relies on stronger adversarial assumptions, namely Scale-MIA \cite{shi2023scale}, which assumes the attacker has access to a subset of the training data to train a decoder that maps activations back to the original samples. As shown in Table~\ref{tab:scale-mia}, ARES achieves an average PSNR improvement of 2.71~$\times$ over Scale‑MIA despite having less prior knowledge.


\subsection{Attack in Diverse Settings}
\label{sec:diverse}
\noindent\textbf{Attack on Text Data.}
We use an MLP network to test the effect of our attack.
We report three evaluation metrics commonly used in text datasets, including accuracy, BLEU score, and ROUGE-L (details in Appendix~\ref{app: eval}).
As shown in Fig.~\ref{fig:wikitext}, our method consistently outperforms RtF across all combinations of sequence lengths and batch sizes, achieving improvements of 5.76\%, 5.84\%, and 8.92\% in accuracy, BLEU, and ROUGE-L on batch size 32, and 15.71\%, 2.91\%, and 13.29\% on batch size 64.
We leave the recovery effect in Appendix~\ref{app: visual} (Fig.~\ref{fig:text_visual}).
\begin{figure}
    \centering   \includegraphics[width=\linewidth]{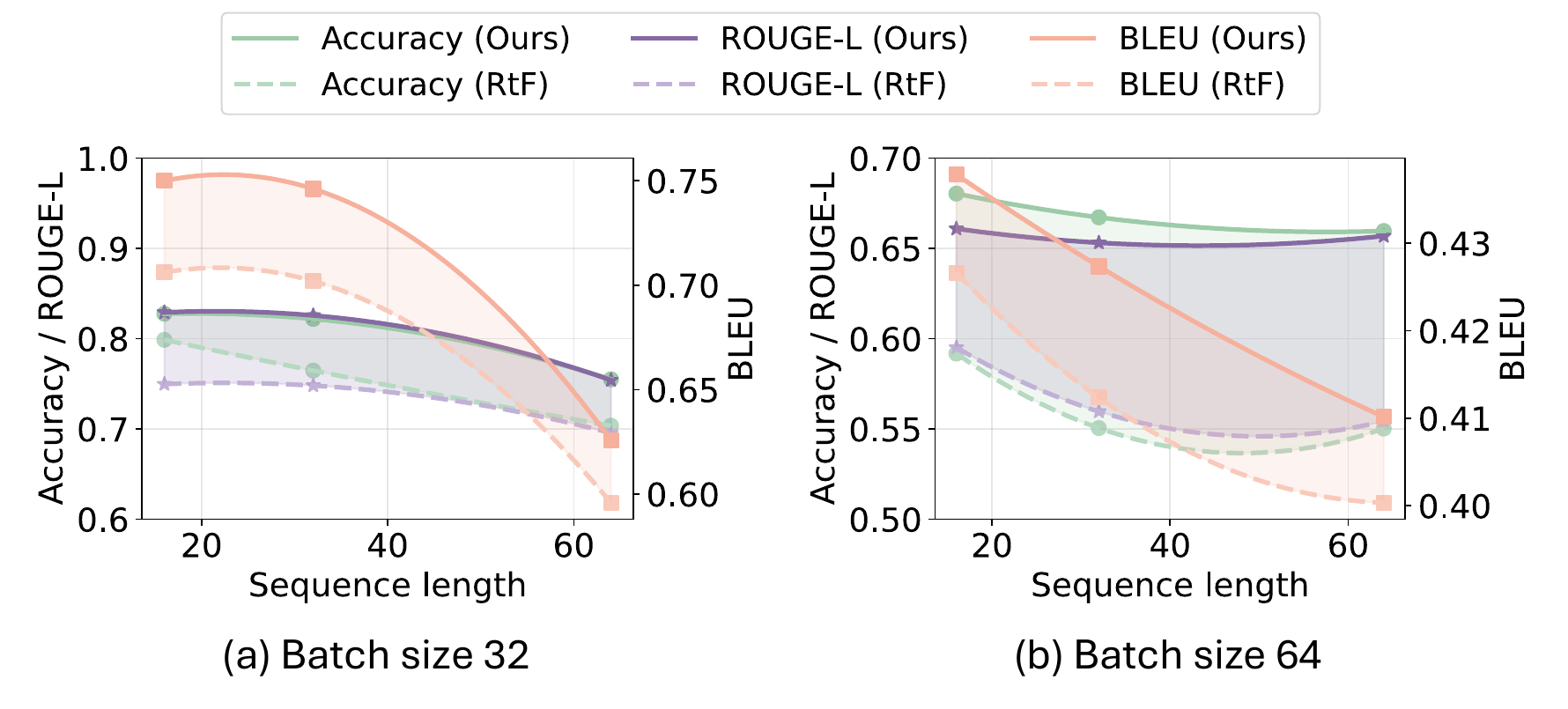}
    \caption{Our attack on the Wikitext dataset.}
    \label{fig:wikitext}
\end{figure}

\noindent\textbf{Attack on Audio Data.}
We use a CNN to test the effect of our attack.
Our method achieves an average PSNR of 58.55 and 45.83 on a batch size of 32 and 64, respectively. 
Ground truth and recovered audio files (WAV) are available\footnote{\url{https://github.com/gongzir1/ARES/tree/main/AudioExample}}.



\noindent\textbf{Label Skew.}
\begin{figure}
    \centering   \includegraphics[width=\linewidth]{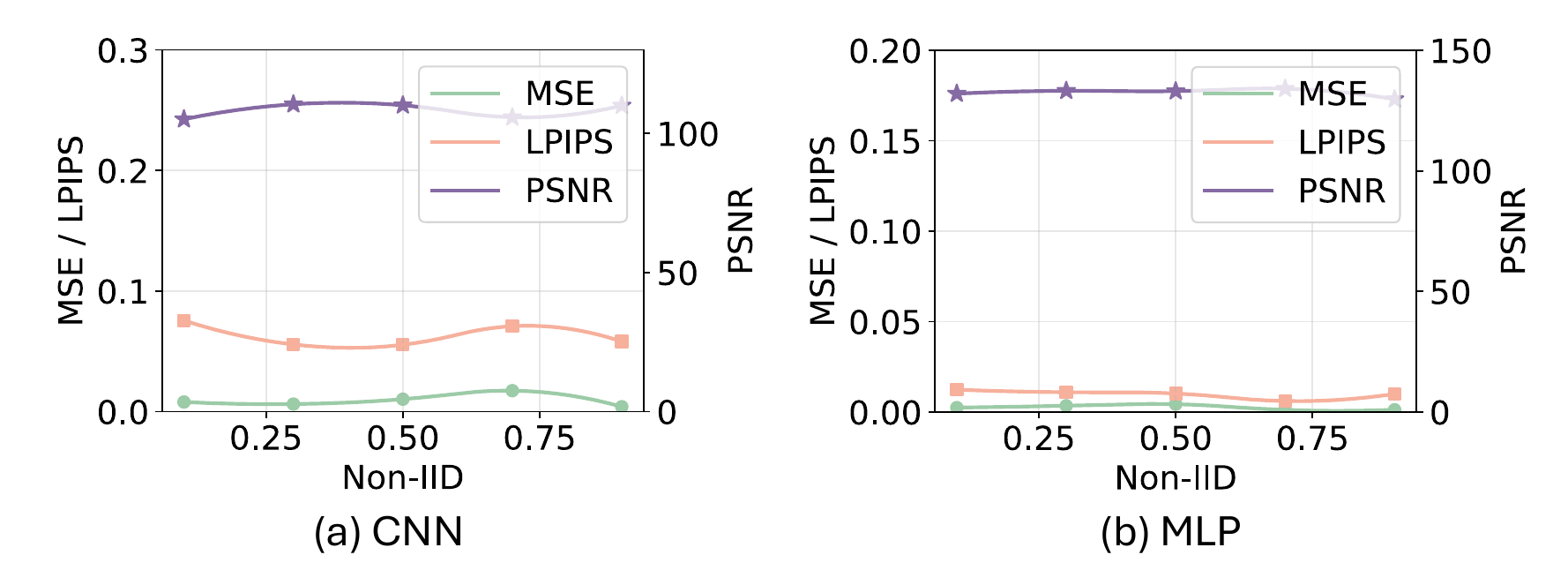}
    \caption{Our attack on non-iid data with label skew.}
    \label{fig: non-iid-label}
\end{figure}
We evaluate our attack under varying degrees of label skew, where each client holds only a subset of classes. The skew scalar controls the level of non-IIDness (0: IID; larger values indicate fewer classes per client). Results on the CIFAR-10 dataset with a batch size of 32 show that our attack consistently achieves strong performance across all levels of label skew (Fig.~\ref{fig: non-iid-label}).

\noindent\textbf{Feature Skew.}
We evaluate our attack under varying degrees of feature skew by partitioning the dataset in the feature space. Specifically, we first project samples onto the top principal components using Principal Component Analysis (PCA) and divide the resulting feature space into multiple regions. Each client is then assigned samples from only a subset of these regions, creating feature distribution differences across clients. The skew scalar controls the number of regions assigned to each client (0: IID; larger values correspond to fewer regions per client and therefore stronger feature skew).
Experimental results on the CIFAR-10 dataset with a batch size of 32 show that, in the CNN network, the PSNR decreases as feature skew increases because more samples fall into the same bin compared to the standard IID case. Nevertheless, the attack still achieves a PSNR of around 100 dB. In contrast, in the MLP network, our attack performs well across all degrees of feature skew, as the second layer enables further separation of samples (Fig.~\ref{fig: non-iid-feature}).

\begin{figure}
    \centering   \includegraphics[width=\linewidth]{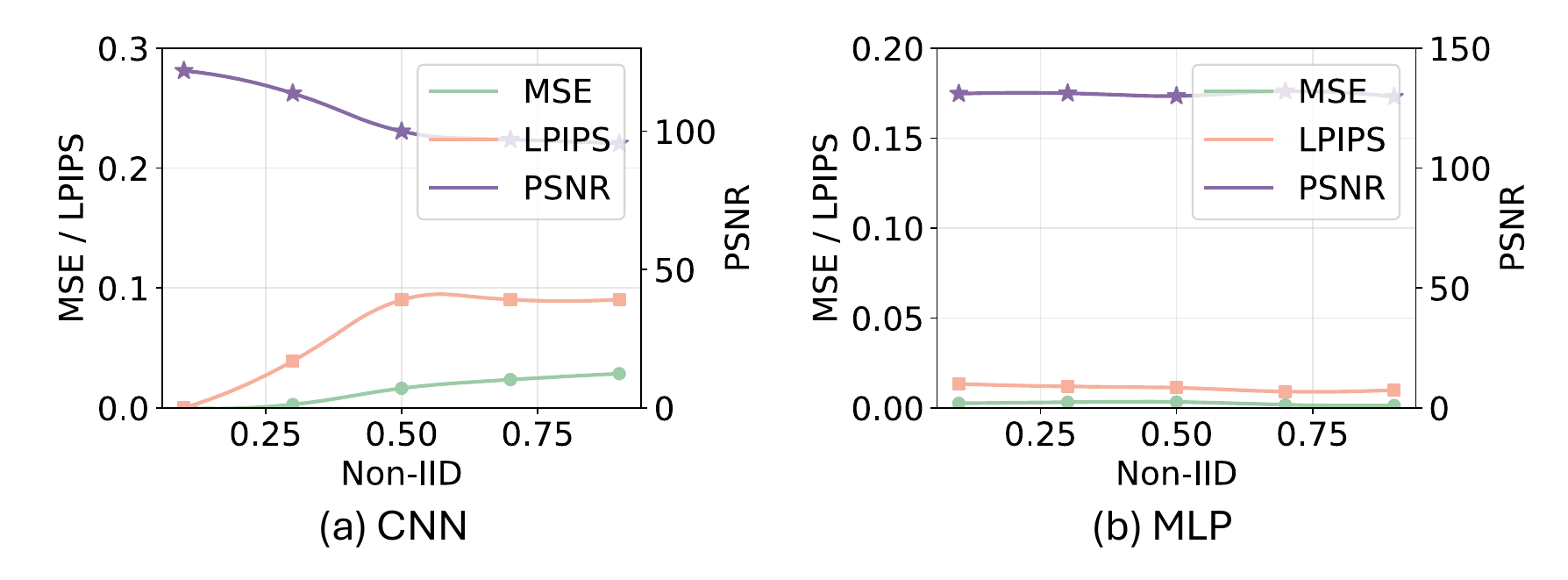}
    \caption{Our attack on non-iid data with feature skew.}
    \label{fig: non-iid-feature}
\end{figure}



\noindent\textbf{Attack on FedAvg.}
\begin{figure}
    \centering   \includegraphics[width=\linewidth]{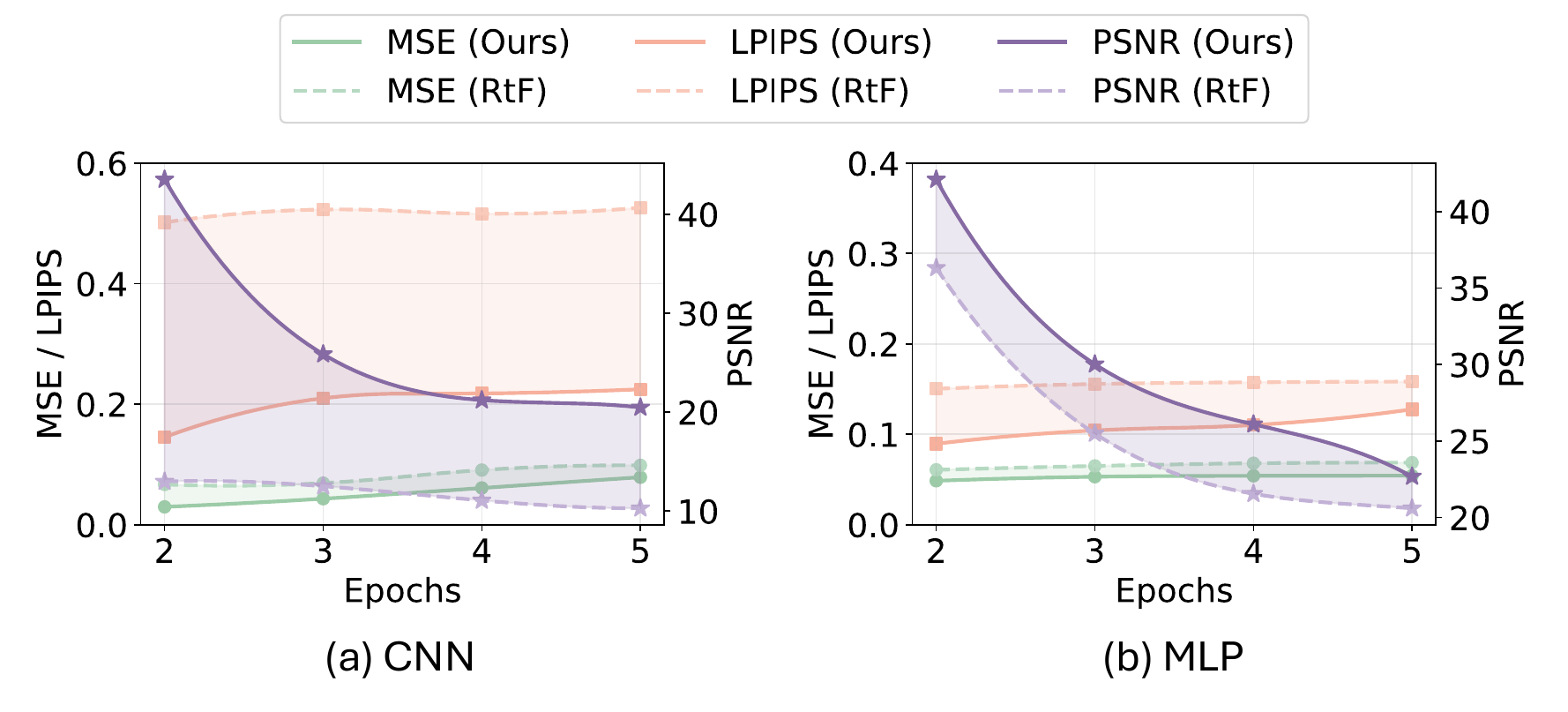}
    \caption{Our attack in the FedAvg setting.}
    \label{fig:fedavg}
\end{figure}
In FedAvg, each client trains locally on its own data for $T$ epochs and then sends model updates to the server.
To attack under this setting, the attacker first estimates the gradient using
\begin{equation}
\label{eq: fedavg_estimate_gradient}
\hat{g} \approx - \frac{1}{lr \, T} \Delta W,
\end{equation}
where $\hat{g}$ is the estimated gradient, $lr$ is the local learning rate, $T$ is the number of local training epochs, $\Delta W$ is the client’s model update.
Next, the server further estimates the intermediate weight for each local epoch as
\begin{equation}
\label{eq: fedavg_estimate_w}
    \hat{W}_t=W_g- t \, lr \hat{g}, \quad t = 1, \dots, T,
\end{equation}
where $\hat{W}_t$ is the estimated model at local epoch $t$ and $W_g$ is the global model at the start of the round.
After obtaining the estimated gradient and weight, the attacker uses Eq. \eqref{eq: objective} and Eq. \eqref{eq: l1} to get the training samples. 
We evaluate our method on FedAvg using the HAM dataset with a local batch size of 8.  
As shown in Fig.~\ref{fig:fedavg}, our approach consistently outperforms RtF\footnote{We compare with the main method (Eq. (4) in \cite{fowlrobbing}). Although the original paper proposes a variant that performs well under the FedAvg setting, it requires a malicious modification of the activation function, which is beyond the scope of our threat model.} on both CNN and MLP networks, achieving PSNR improvements o 2~$\times$ and 1.16~$\times$, respectively.

\noindent\textbf{Attack on Asynchronous FL.} Similarly, we use Eq. \eqref{eq: fedavg_estimate_gradient} and Eq. \eqref{eq: fedavg_estimate_w} to estimate the gradient and local model weight for each client. And use Eq. \eqref{eq: objective} and Eq. \eqref{eq: l1} to get the training samples. 
We evaluate our method on the Asynchronous FL framework \cite{nguyen2022federatedasynchronous} using the HAM dataset with a local batch size of 8. As shown in Fig.~\ref{fig:asynchronous}, our approach consistently outperforms RtF on both CNN and MLP networks, achieving  PSNR improvements of 2~$\times$ and 1.15~$\times$ in PSNR, respectively.
\begin{figure}
    \centering   \includegraphics[width=\linewidth]{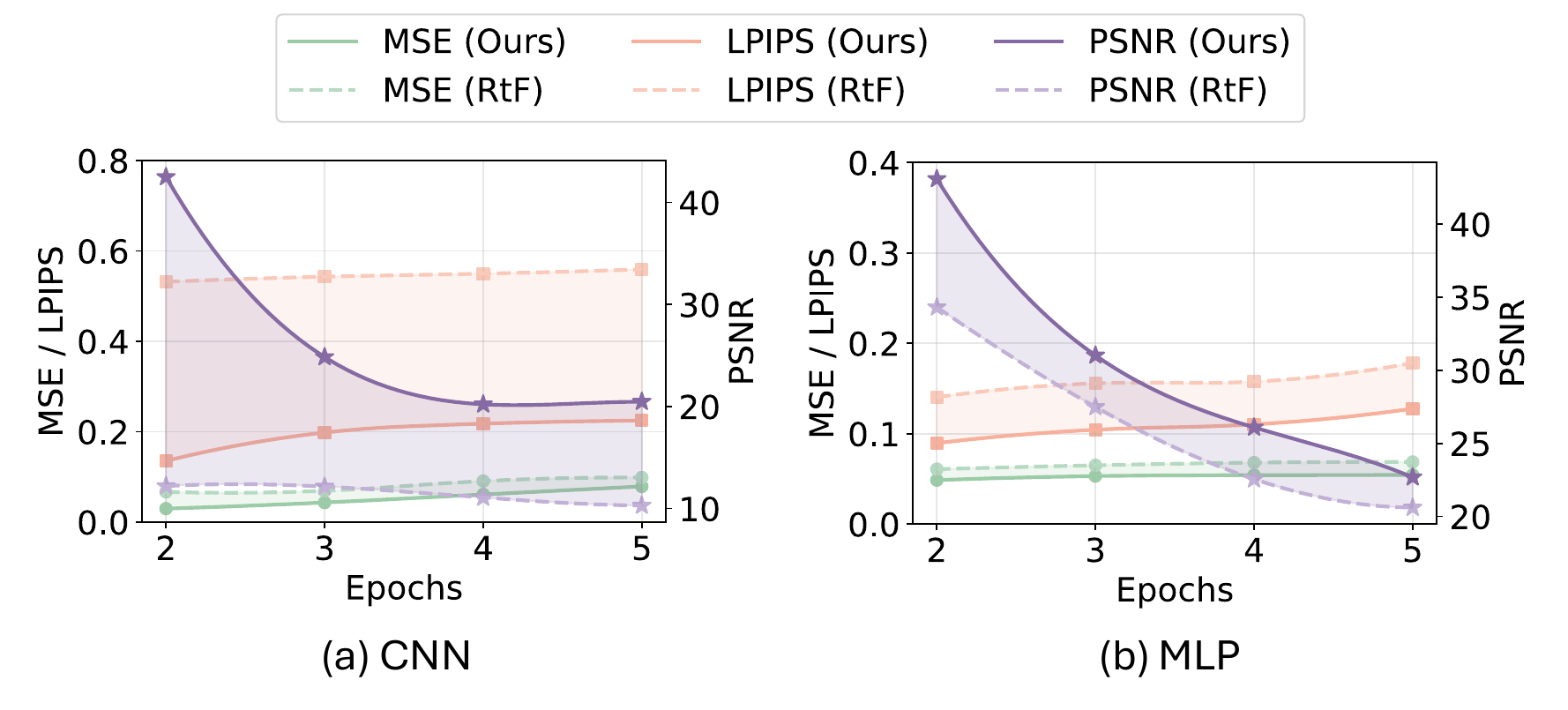}
    \caption{Our attack in the Asynchronous FL setting.}
    \label{fig:asynchronous}
\end{figure}

\noindent\textbf{Impact of Activation Function on Noisy Sparse Recovery.}
We evaluate the performance of our noisy sparse recovery under commonly used activation functions, including GELU, ELU, and SiLU. The problem is solved by optimizing Eq.~(12) using a gradient-based method (SGD). Experiments on the Conv4 network show strong reconstruction performance across all activation functions, achieving single-sample PSNR values of 82.25 dB (GELU), 78.39 dB (ELU), and 85.71 dB (SiLU).

\noindent\textbf{Impact of Activation Functions on Linear Layer Leakage.}
We evaluate the performance of linear-layer leakage under commonly used activation functions, including GELU, ELU, and SiLU. Although the choice of activation function can influence leakage, these activations can be adapted to exhibit ReLU-like behavior. Specifically, ELU reduces to ReLU when its scaling parameter is set to zero. For GELU and SiLU, scaling the pre-activation FC weights pushes activations away from zero, thereby making them behave similarly to ReLU.
Experiments on Conv4 with a batch size of 32 demonstrate strong performance across all activation functions, achieving a PSNR of 100.95 dB with a 97\% recovery rate for ELU, 94.70 dB with a 93\% recovery rate for GELU, and 94.56 dB with a 93\% recovery rate for SiLU.

\section{Conclusion}
In this work, we introduce ARES, a practical and effective active GIA capable of recovering training data from gradients with high fidelity under realistic large batch sizes and reasonable adversary assumptions.
It achieves this by addressing the fundamental challenge of inverting hidden activations into training samples. By exploiting the sparse nature of real-world data and leveraging principles from compressed sensing, we formulate the inversion task as a noisy sparse recovery problem, overcoming the underdetermined and nonlinear challenges inherent in the task.
We establish a theoretical upper bound on the recovery error and validate our approach through extensive experiments across diverse datasets, architectures, and defense mechanisms. The results demonstrate that our ARES consistently achieves superior recovery fidelity, highlighting the underestimated privacy risks of FL in real-world settings. 
Investigating the attack’s scalability to architectures without the Linear+ReLU structure and its robustness against targeted defenses and homomorphic encryption remains a promising direction for future work.
\section{Ethics Considerations}
The experiments are conducted exclusively on public datasets and open-source models within controlled environments, without access to real-world deployments or personal data. No undisclosed vulnerabilities are introduced or exploited, and no human subjects are involved in this study.
\section{LLM Usage Considerations}
LLMs were used for editorial purposes in this manuscript, and all outputs were inspected by the authors to ensure accuracy and originality.
\bibliographystyle{IEEEtran}
\bibliography{main}
\begin{appendices}
\section{Derivations}

\subsection{Activation Recovery from FC Layer}
\label{app: batch_activation_recovery}
For a batch of $N$ samples $\mathcal{N} = \{1, \dots, N\}$, the forward pass of neuron $i$ in layer $l$ for sample $n$ is
\begin{equation}
\label{eq:proof_forward}
    z_{i,n}^{(l)} = W_i^{(l)} h^{(l-1)}_n + b_i^{(l)}, 
\end{equation}
where $h^{(l-1)}_n$ denotes the input vector of sample $n$ to layer $l$.
According to the chain rule, the gradient of the loss with respect to the weight for neuron $i$ is
\begin{equation}
\label{eq:proof_weight_gradient}
    g_i^{(W)}=\frac{\partial \mathcal{L}}{\partial W_i^{(l)}}
    = \sum_{n \in \mathcal{N}} 
    \frac{\partial \mathcal{L}}{\partial z_{i,n}^{(l)}} 
    \frac{\partial z_{i,n}^{(l)}}{\partial W_i^{(l)}}.
\end{equation}
The gradient with respect to the bias $b_i^{(l)}$ is
\begin{equation}
\label{eq:proof_bias_gradient}
    g_i^{(b)}=\frac{\partial \mathcal{L}}{\partial b_i^{(l)}}
    = \sum_{n \in \mathcal{N}}
    \frac{\partial \mathcal{L}}{\partial z_{i,n}^{(l)}} 
    \frac{\partial z_{i,n}^{(l)}}{\partial b_i^{(l)}}.
\end{equation}
Using the linear relationship in Eq. \eqref{eq:proof_forward}, we have
\begin{equation}
\label{eq:proof_linear}
    \frac{\partial z_{i,n}^{(l)}}{\partial W_i^{(l)}} = h^{(l-1)}_n, 
    \quad
    \frac{\partial z_{i,n}^{(l)}}{\partial b_i^{(l)}} = 1.
\end{equation}
Combine Eq.~\eqref{eq:proof_weight_gradient}, Eq.~\eqref{eq:proof_bias_gradient} and Eq.~\eqref{eq:proof_linear}, we have
\begin{align}
    g_i^{(W)} &= \sum_{n \in \mathcal{N}} 
    \frac{\partial \mathcal{L}}{\partial z_{i,n}^{(l)}} h^{(l-1)}_n 
    = \sum_{n \in \mathcal{N}} \gamma_i^n h^{(l-1)}_n, \\
    g_i^{(b)} &= \sum_{n \in \mathcal{N}} 
    \frac{\partial \mathcal{L}}{\partial z_{i,n}^{(l)}} 
    = \sum_{n \in \mathcal{N}} \gamma_i^n,
\end{align}
where $\gamma_i^n := \frac{\partial \mathcal{L}}{\partial z_{i,n}^{(l)}}$ denotes the gradient of the loss with respect to the pre-activation of neuron $i$.
Taking the ratio of the weight and bias gradients yields
\begin{equation}
\label{eq: linear_recovery_batch_appendix}
    \frac{g_i^{(W)}}{g_i^{(b)}} 
    = \frac{\sum_{n \in \mathcal{N}} \gamma_i^n h^{(l-1)}_n}
           {\sum_{n \in \mathcal{N}} \gamma_i^n}.
\end{equation}
This expression corresponds to a \emph{weighted average} of the input vectors 
$\{h^{(l-1)}_n\}_{n=1}^N$, where the weights are given by the gradient coefficients 
$\{\gamma_i^n\}_{n=1}^N$.
When the batch size reduces to $N=1$, Eq.~\eqref{eq: linear_recovery_batch_appendix} simplifies to
\begin{equation}
    \frac{g_i^{(W)}}{g_i^{(b)}} = h^{(l-1)},
\end{equation}
which recovers the single-sample result in Eq.~\eqref{eq: single_fc_recovery}.

\subsection{Derivation of Equal Backpropagation Signals}
\label{app: identical_columns}
Consider neuron $i$ in layer $l$ with backpropagation signal
\begin{equation}
    \gamma_i := \frac{\partial \mathcal{L}}{\partial z_i^{(l)}}
    = \sum_j \frac{\partial \mathcal{L}}{\partial z_j^{(l+1)}} 
      \frac{\partial z_j^{(l+1)}}{\partial z_i^{(l)}}
    = \sum_j \frac{\partial \mathcal{L}}{\partial z_j^{(l+1)}} \, W_{ji}^{(l+1)},
\end{equation}
where $W^{(l+1)}$ is the weight matrix of layer $l+1$.
Similarly, for neuron $k$, 
\begin{equation}
    \gamma_{k} = \sum_j \frac{\partial \mathcal{L}}{\partial z_j^{(l+1)}} \, W_{jk}^{(l+1)}.
\end{equation}
Once  
\begin{equation}
    W_{ji}^{(l+1)} = W_{j,k}^{(l+1)},
\end{equation}
then it can achieve $\gamma_i = \gamma_{k}$.

\section{Detailed Explanation on Experiment Set Up}

\begin{table}[h]
\centering
\begin{tabular}{c|c}
\toprule
\textbf{Architecture} & \textbf{Layers} \\
\midrule
&Conv(out=12, k=3, s=1, p=1, act=relu) \\
CNN &Conv(out=24, k=3, s=1, p=1, act=relu) \\
&FC(k=1024, act=relu) \\
&FC(k=\#class, act=softmax) \\
\midrule
 & FC(k=512, act=relu), \\
MLP&FC(k=512, act=relu), \\
&FC(k=512, act=relu), \\
&FC(k=\#class, act=softmax) \\
\bottomrule
\end{tabular}
\caption{Network architectures. Conv: \textit{out} = number of filters, \textit{k} = kernel size, \textit{s} = stride, \textit{p} = padding, \textit{act} = activation. FC: \textit{k} = number of neurons, \textit{act} = activation.
}
\label{table:network}
\end{table}

\subsection{Evaluation Metrics}
\label{app: eval}
\noindent\textbf{MSE} (Mean Squared Error) is a metric that computes the average of squared intensity differences between the reconstructed image and the ground truth. 
Lower MSE indicates better pixel-wise fidelity.
\noindent\textbf{PSNR} (Peak Signal-to-Noise Ratio) is a metric that quantifies the fidelity of reconstructed images relative to ground truth. It is defined as a logarithmic function of the MSE between two images, with higher values indicating better reconstruction quality. 
\noindent\textbf{LPIPS} (Learned Perceptual Image Patch Similarity) is a metric that measures perceptual distance using deep neural network features pretrained on large-scale image data. Lower values indicate higher perceptual similarity. 
\noindent\textbf{Recovery Rate.} We count the number of samples that fall into distinct bins and divide this number by the batch size to compute the recovery rate.
\noindent\textbf{Reconstruction accuracy} measures the proportion of tokens in the reconstructed text that match the original tokens. 
\noindent\textbf{BLEU score} evaluates the overlap of n-grams (sequences of one or more words) between the reconstructed and original text, rewarding partial matches and fluency even when not all tokens are identical. 
\noindent\textbf{ROUGE-L} captures similarity based on the longest common subsequence, highlighting how well the global word order and sentence structure in the reconstruction align with the original text. 

\subsection{Compared Attacks}
\label{app: attacks}
\noindent \textbf{iDLG} (Improved Deep Leakage from Gradients) \cite{zhao2020idlg} is a passive GIA attack that enables recovery of both training sample and label for a single sample. The key idea is that, under cross-entropy loss with one-hot labels, the ground-truth label can be directly inferred from the sign of the gradient of the last FC layer. With the true label identified, iDLG then reconstructs the input sample by iteratively optimizing dummy data to match the observed gradients. 

\noindent \textbf{InvertingGrad (IG)} \cite{geiping2020inverting} is a passive GIA that reconstructs data by optimizing dummy inputs to match the shared gradients. To improve reconstruction quality, it incorporates additional priors (i.e., total variation) to regulate the optimization space. This regularization yields natural-looking images and enhances the fidelity of the recovered data.

\noindent \textbf{GradInversion (GI)}  \cite{yin2021see} is a passive GIA that assumes access to batch normalization (BN) statistics to constrain reconstructions. Following iDLG, it directly infers the ground-truth label from the final layer’s gradients, avoiding unstable label optimization. To further improve reconstruction quality, GradInversion introduces a group fidelity term that iteratively aligns reconstructed gradients with the originals, producing high-resolution and semantically accurate images.

\noindent \textbf{FedLeak} \cite{fan2025boosting} is a passive GIA designed for realistic FL settings. It tackles the core challenge of gradient matching through two techniques: partial gradient matching, which targets informative gradient components, and gradient regularization, which stabilizes optimization.

\noindent \textbf{Fishing} \cite{wen2022fishing} is an active GIA that aims to recovery single sample in a batch of samples. By decreasing the network’s confidence in the target class and target feature, it encourage the gradient come from only the target (single) sample.

\noindent \textbf{Robbing (RtF)} \cite{fowlrobbing} is an active GIA that uses the linear layer leakage to recover training samples. By carefully designing the weights and biases of FC layers, RtF imprints each neuron with a single data point, ensuring that its activation predominantly corresponds to that sample. 

\noindent \textbf{Trap Weight (TW)} \cite{boenisch2023curious} is an active GIA that reconstructs training samples by configuring FC weights so each neuron responds to a single input. It sets roughly half of the weights in FC layer to small negative and half to positive values, isolating individual samples in a batch. TW also uses direct-pass weights in convolutional layers to avoid architectural changes, but it only works for nonnegative inputs; standard normalization with ReLU zeroes negative values, breaking the identity mapping and causing information loss.

\noindent \textbf{LOKI} \cite{zhao2024loki} is an active GIA targeting secure aggregation-based FL, where only aggregated gradients are visible to the server. It inserts an extra convolutional layer before the FC layer and assigns each client a unique subset of kernels with direct-pass method and others are set to zero. This ensures client-specific activations, enabling the server to disentangle and recover individual gradients after aggregation. Building on the imprint method proposed by RtF \cite{fowlrobbing}, it enables individual sample recovery at scale.

\noindent \textbf{Scale-MIA} \cite{shi2023scale} is an active GIA built upon RtF to separate sample contributions in FC layer. It avoids architectural modifications by leveraging the built-in FC layer for linear layer leakage. However, it requires a subset of the training data to train a decoder that maps latent representations back to samples, limiting its generalization to unseen domains.

\subsection{Evaluated Defenses}
\label{app: defenses}
\noindent\textbf{Differential Privacy (DP)} \cite{geyer2017differentially} protects client data by adding random noise to local gradients before they are shared with the server. Each client first clips its gradient to a maximum norm to limit the influence of any individual training sample, and then adds noise, so that the resulting gradient reveals only limited information.

\noindent\textbf{Gradient Quantization} \cite{yue2023gradient} reduces the precision of client gradients before sending them to the server. This limits the amount of information that can be extracted from individual gradients, while also reducing communication overhead. 

\noindent\textbf{Gradient Sparsification} \cite{yue2023gradient} reduces the amount of information transmitted by sending only a subset of the gradient elements to the server, typically those with the largest magnitudes.
This not only lowers communication costs in FL but also limits the information available to potential attackers attempting gradient inversion.

\noindent\textbf{Data augmentation} techniques apply carefully chosen transformations to the training data to prevent adversaries from reconstructing both the augmented and original samples from shared gradients \cite{gao2021privacy, gao2023automatic}. 
To achieve this, ATS \cite{gao2021privacy} employs a privacy score and a training-free accuracy metric to automatically discover effective transformations, yielding a lightweight privacy defense.

\noindent\textbf{Secure aggregation} is a privacy-preserving techniques in FL that allow the server to aggregate local model updates from multiple clients without ever seeing individual updates. A commonly used example is Masking-Based Secure Aggregation (SA) \cite{bonawitz2017practical,fereidooni2021safelearn}. In this approach, each client adds a random mask to its local model update before sending it to the server. When the server sums all masked updates, the masks cancel out, enabling the server to recover only the plaintext of the aggregated model.

\section{Visualization Results}
\label{app: visual}
\begin{figure}
    \centering\includegraphics[width=0.95\linewidth]{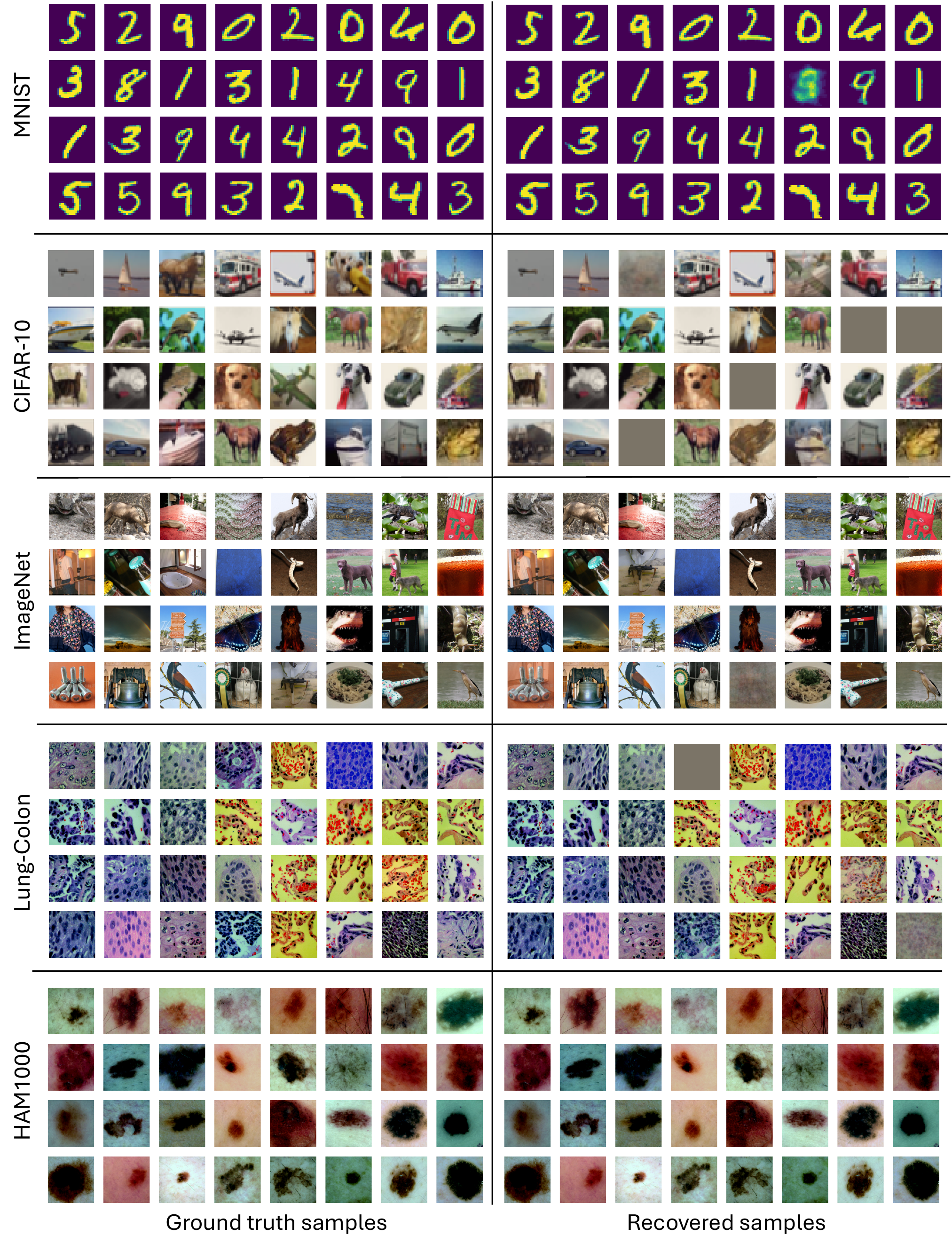}
    \caption{Visual illustration of the recovery effect on five image datasets.}
    \label{fig:visual}
\end{figure}
\begin{figure}
    \centering   \includegraphics[width=0.9\linewidth]{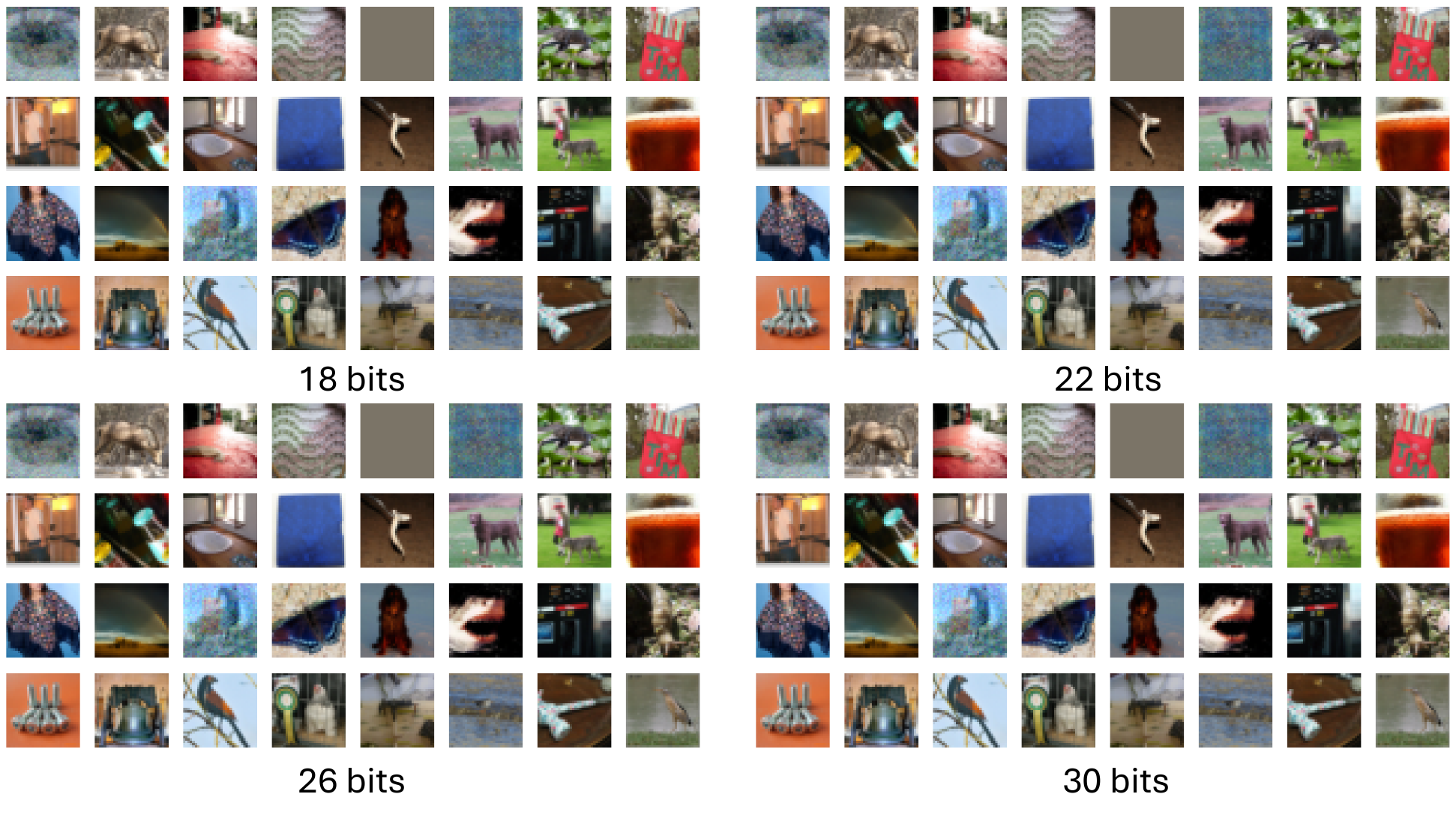}
    \caption{Visualization of attack effect under gradient quantization-based defense.}
    \label{fig:visual_quant}
\end{figure}
\begin{figure}
    \centering   \includegraphics[width=0.9\linewidth]{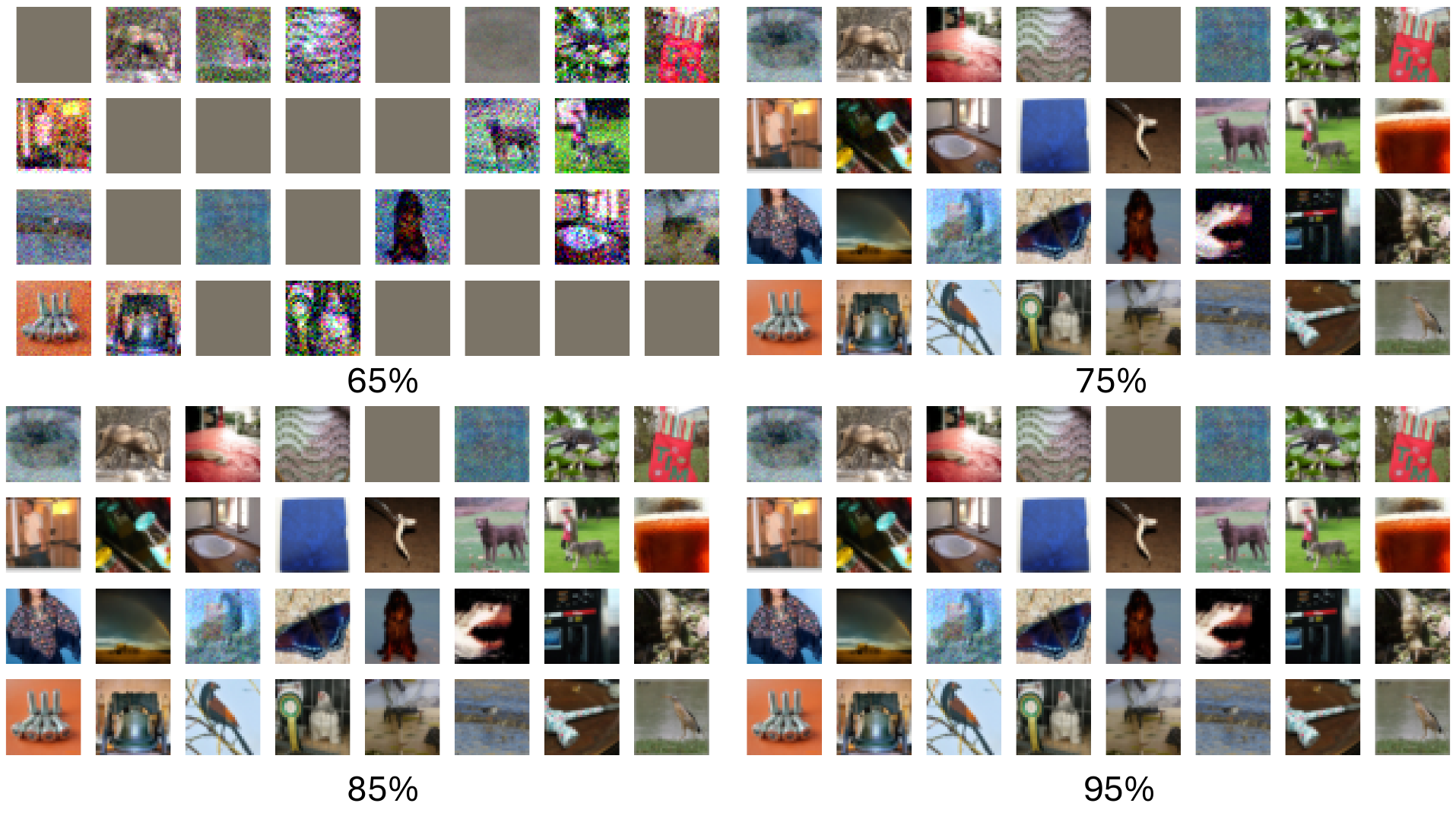}
    \caption{Visualization of attack effect under gradient sparsity-based defense.}
    \label{fig:visual_sparse}
\end{figure}
\begin{figure}
    \centering   \includegraphics[width=0.9\linewidth]{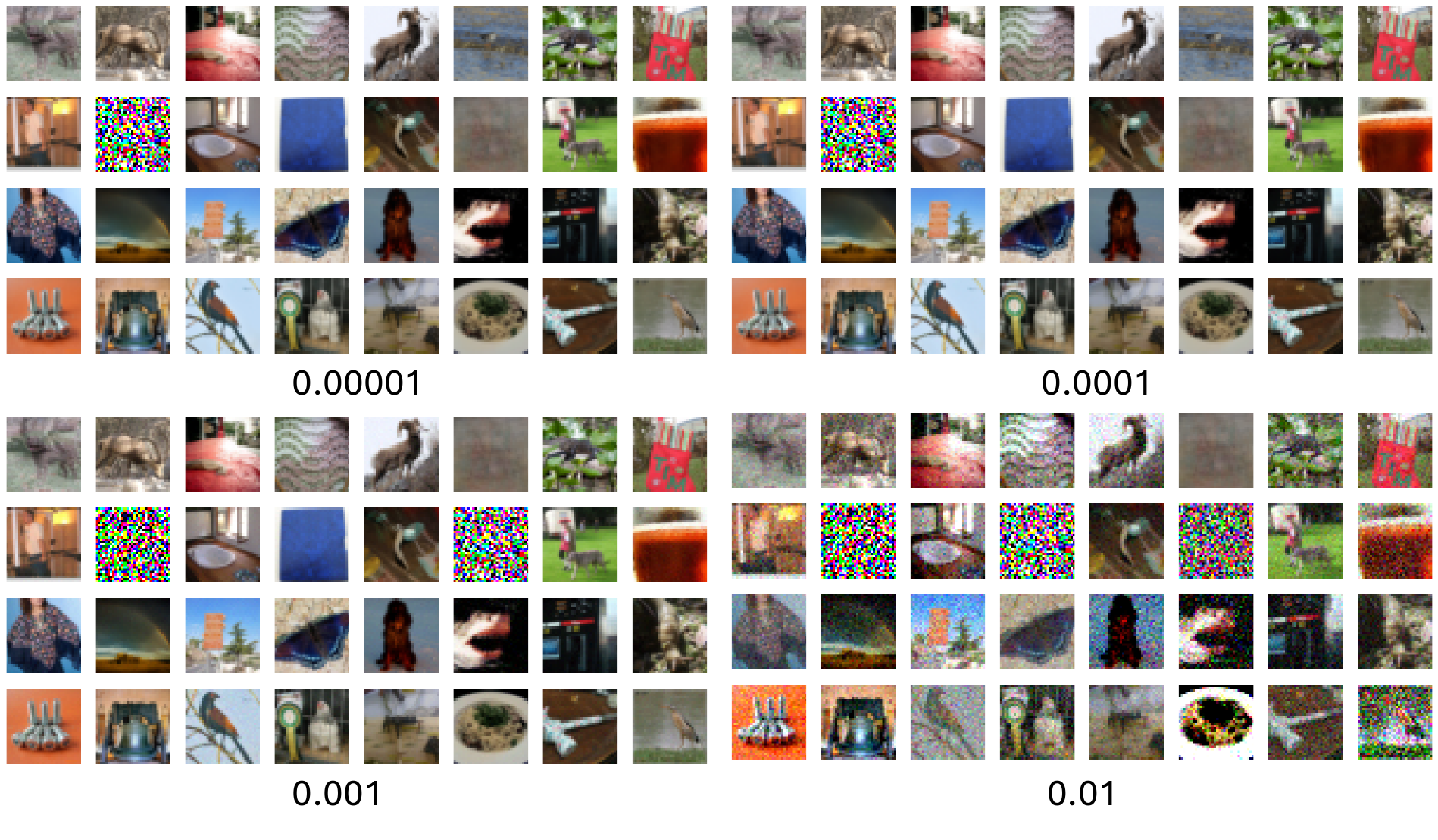}
    \caption{Visualization of attack effect under different differential privacy noise. }
    \label{fig:visual_dp}
\end{figure}
\begin{figure}
    \centering   \includegraphics[width=0.9\linewidth]{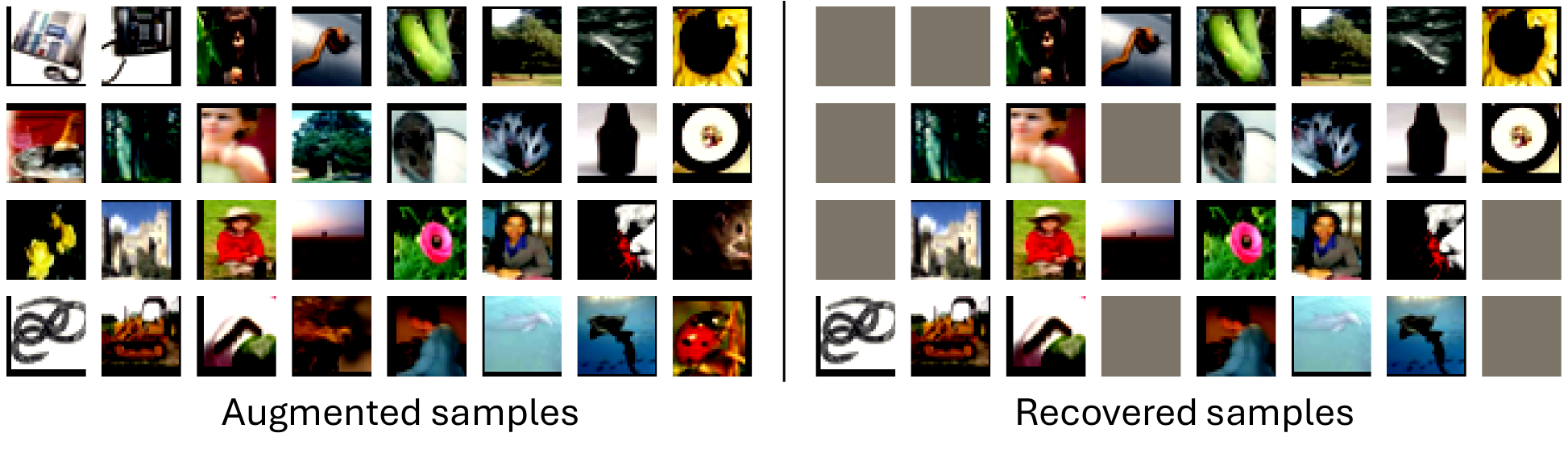}
    \caption{Visualization of the augmented samples and recovered samples.}
    \label{fig:visual_aug}
\end{figure}

Fig. \ref{fig:visual} presents the ground truth and recovered samples across five image datasets using a CNN with a batch size of 32. In all datasets, AERS successfully reconstructs the samples without visually perceptible loss whenever the samples fall into distinct bins.
Fig. \ref{fig:text_visual} shows examples of the original ground-truth training text and the corresponding recovered text on the WikiText dataset using an MLP network.
Fig. \ref{fig:visual_quant} to \ref{fig:visual_aug} shows the recovery effect of our attack under different defenses, including gradient quantization-based defense (Fig. \ref{fig:visual_quant}),  gradient sparsity-based defense (Fig. \ref{fig:visual_sparse}), differential privacy-based defense (Fig. \ref{fig:visual_dp}), and data argumentation-based defense (Fig. \ref{fig:visual_aug}).


\begin{figure}
    \centering\includegraphics[width=0.9\linewidth]{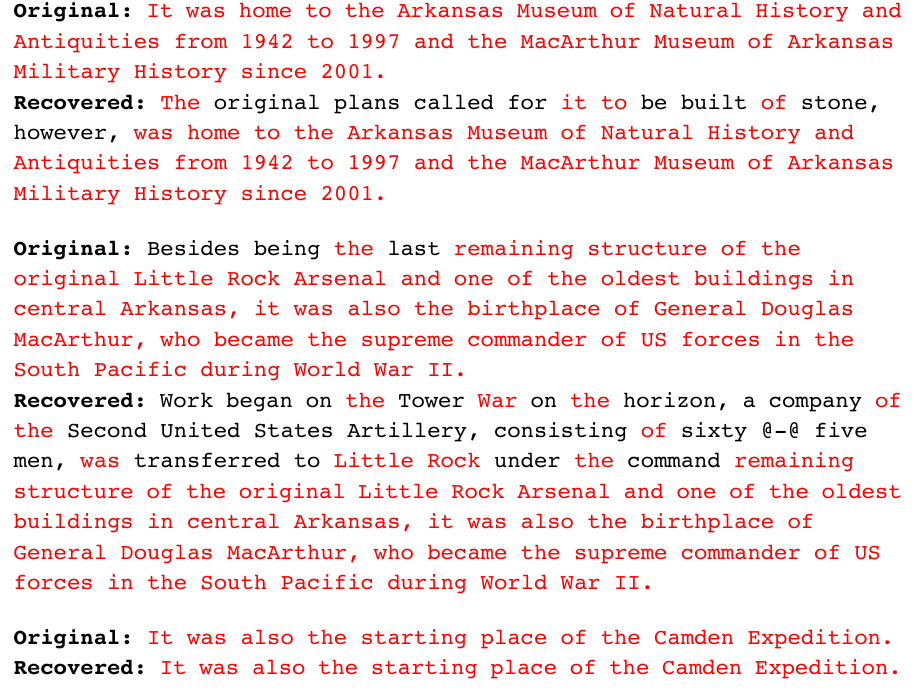}
    \caption{Visual illustration of the recovery effect on the Wikitext dataset. Red text indicates the matching tokens.}
    \label{fig:text_visual}
\end{figure}

\clearpage

\end{appendices}

\end{document}